\documentclass{article}

\usepackage[preprint]{neurips_2026}

\usepackage[utf8]{inputenc} % allow utf-8 input
\usepackage[T1]{fontenc}    % use 8-bit T1 fonts
\usepackage{algorithm}
\usepackage{algpseudocode}
\usepackage{hyperref}       % hyperlinks
\usepackage{url}            % simple URL typesetting
\usepackage{booktabs}       % professional-quality tables
\usepackage{amsfonts}       % blackboard math symbols
\usepackage{amsmath}
\allowdisplaybreaks
\usepackage{amssymb}
\usepackage{amsthm}
\usepackage{dsfont}
\usepackage{cancel}
\usepackage{mathtools}
\usepackage{nicefrac}       % compact symbols for 1/2, etc.
\usepackage{microtype}      % microtypography
\usepackage[dvipsnames]{xcolor}         % colors
\theoremstyle{plain}
\newtheorem{theorem}{Theorem}[section]
\newtheorem{proposition}[theorem]{Proposition}

\theoremstyle{definition}

\newcommand{\A}{\mathcal{A}}

\newcommand{\G}{\mathcal{G}}

\newcommand{\V}{\mathcal{V}}
\newcommand{\E}{\mathcal{E}}
\newcommand{\X}{\mathcal{X}}
\newcommand{\Y}{\mathcal{Y}}

\newcommand{\J}{\mathcal{J}}
\newcommand{\R}{\mathbb{R}}

\newcommand{\fB}{\mathfrak{B}}
\newcommand{\di}{\mathrm{d}}
\newcommand{\0}{\mathbf{0}}

\newcommand{\Q}{\mathbb{Q}}

\newcommand{\pa}[1]{\mathrm{pa}(#1)}
\newcommand{\ch}[1]{\mathrm{ch}(#1)}
\newcommand{\kld}{D_{\mathrm{KL}}}

\renewcommand{\L}{\mathcal{L}}
\renewcommand{\P}{\mathbb{P}}
\renewcommand{\S}{\mathcal{S}}

\title{Neural Backward Filtering Forward Guiding}

\author{%
  Gefan Yang \\
  Department of Computer Science \\
  University of Copenhagen \\
  Universitesparken 1, 2100 København, Denmark \\
  \texttt{gy@di.ku.dk} \\
  \And
  Frank van der Meulen \\
  Department of Mathematics \\
  Vrije Universiteit Amsterdam \\
  De Boelelaan 1111, 1081HV Amsterdam, The Netherlands \\
  \texttt{f.h.van.der.meulen@vu.nl} \\
  \AND
  Stefan Sommer \\
  Department of Computer Science \\
  University of Copenhagen \\
  Universitesparken 1, 2100 København, Denmark \\
  \texttt{sommer@di.ku.dk}
}

\begin{document}

\maketitle

\begin{abstract}
    Inference in nonlinear continuous stochastic processes on trees is challenging, particularly when observations are sparse and the topology is complex. Exact smoothing via Doob's $h$-transform is intractable for general nonlinear dynamics. We propose Neural Backward Filtering Forward Guiding (NBFFG), a unified framework for both discrete transitions and continuous diffusions. Our method constructs a variational posterior by leveraging a proxy linear-Gaussian process. This proxy process yields a closed-form backward filter that serves as a guide, steering the generative path toward high-likelihood regions. We then learn a neural residual to capture the non-linear discrepancies. This formulation allows for an unbiased pathwise subsampling scheme, reducing the training complexity from tree-size dependent to path-length dependent. Empirical results show that NBFFG outperforms baselines on synthetic benchmarks, and we demonstrate the method on a high-dimensional inference task in phylogenetic analysis with reconstruction of ancestral butterfly wing shapes.
\end{abstract}

\section{Introduction}
Tree-structured stochastic processes are fundamental for modeling hierarchical dependencies in complex systems, from phylogenetic evolution in biology \citep{felsenstein_evolutionarytreesdna_1981, huelsenbeck_stochastic_2003} and spin-glass state spaces in statistical physics \citep{bovier_gaussian_2017} to cascading diffusion in social networks \citep{kempe_maximizing_2003}. In these settings, a central inferential challenge is smoothing: reconstructing latent internal states and trajectories from partially available observations. Exact Bayesian smoothing is characterized by Doob's $h$-transform, where future observations enter through a harmonic $h$-function computed by collecting evidence from leaves to the root, equivalently by belief propagation or sum-product message passing. While this backward pass is analytically tractable for simple Linear-Gaussian (LG) models, it becomes computationally prohibitive for general nonlinear dynamics.

Current approaches often rely on Sequential Monte Carlo (SMC) or Markov chain Monte Carlo (MCMC). However, particle methods can suffer severe weight degeneracy as the hierarchy deepens, and MCMC can mix slowly when distant observations provide weak global information to internal states. We introduce Neural Backward Filtering Forward Guiding (NBFFG), a variational framework that approximates the intractable smoothing posterior by correcting a tractable guided proposal. The proposal is built from a cheap proxy backward filter, which replaces the exact Doob's $h$-function with a closed-form approximation and steers samples toward regions compatible with descendant observations. This proxy need not be accurate: it is only required to provide coarse global information. A neural residual then corrects the induced proposal over the tree, with parameterizations adapted to discrete transitions and continuous diffusion edges.

Our contributions are:
\begin{itemize}
    \item We formulate tree smoothing as neural correction of a guided proposal: a cheap proxy backward filter provides global descendant information, while a neural residual corrects the induced posterior, even under rough or misspecified proxies.
    \item We give a unified variational construction for discrete tree transitions and continuous diffusion edges, instantiated as Gaussian-mixture corrections and controlled SDEs.
    \item We introduce path-subsampled training with amortized correction over root-to-node contexts, reducing each update from the full tree to sampled root-to-leaf paths.
\end{itemize}
We validate these claims on exact-posterior LG benchmarks, nonlinear multimodal systems, and high-dimensional phylogenetic shape inference.

\section{Problem Setup}

% \vspace{-5pt}
\subsection{Generative process on the tree}
% \vspace{-5pt}
Let $\G=(\V,\E)$ be a directed tree rooted at a fixed vertex $\0$. We write $\V^+=\V\setminus\{\0\}$ for the non-root vertices, $\L\subset\V$ for the leaves, and $\S=\V\setminus(\{\0\}\cup\L)$ for the non-root internal vertices. The root is fixed at $X_\0=x_\0$, and each non-root vertex $v\in\V^+$ carries a random variable $X_v$ taking values in a measurable space $\X_v$. When the root is itself unobserved, we represent it as a vertex $\rho\in\S$ attached to the fixed super-root $\0$; root inference then means inference on $X_\rho$. We denote by $\pa{v}$ and $\ch{v}$ the parent and children of $v$, respectively. Let $\L_v\subseteq \L$ denote the leaves of the subtree rooted at $v$.

All leaves are observed terminal variables. Observations of an internal hidden state are represented by attaching observed leaf children to that state, with multiple observations corresponding to multiple leaves. Thus latent dynamics and observation models are both treated as edge transitions, with leaf edges clamped to their observed child values.

The prior is a Markov generative process on the tree: conditional on a parent state, the subtrees below its children evolve independently. Hence the joint law over non-root variables factorizes edge-wise as
\begin{equation}
    \P(\di x_{\V^+}\mid x_\0)
    =
    \prod_{v\in\V^+} P_v(x_{\pa{v}}, \di x_v),
    \label{eq:prior}
\end{equation}
where $P_v:\X_{\pa{v}}\rightharpoonup\X_v$ is the Markov kernel along $(\pa{v},v)$. This notation covers both transition types---discrete and continuous transitions---considered in the paper. For discrete edges, we consider per-edge Gaussian kernels on $\R^d$,
\begin{equation}
    P_v(x, \di y)
    =
    \mathcal{N}\!\left(\di y; \mu_v(x), \Sigma_v(x)\right).
    \label{eq:discrete_edge}
\end{equation}
For continuous edges, we first distinguish the vertex state $X_v$ on $\R^d$ from the within-edge path $Z_v(t)$ on the continuous path space $\mathcal{C}([0,T_v], \R^d)$. Conditional on $X_{\pa{v}}=Z_v(0)=x$, the path starts at the parent value and ends at the child state:
\begin{equation}
    \di Z_v(t)
    =
    b_v(Z_v(t))\di t
    +
    \sigma_v(Z_v(t))\di W(t),
    \qquad
    Z_v(0)=x,
    \quad
    X_v=Z_v(T_v).
    \label{eq:continuous_edge}
\end{equation}
This induces a per-edge path law $\P^x_v$, and the corresponding vertex-to-vertex kernel is the endpoint law of this diffusion. A measure-theoretic formulation of these kernels is given in Appendix~\ref{app:problem_setup_details}, and an illustration of the generative process on the tree is shown in Figure~\ref{fig:nbffg} (a).

% \vspace{-5pt}
\subsection{Smoothing on the tree}
% \vspace{-5pt}
We now turn from the generative description to the smoothing problem. For each leaf $l\in\L$, assume the leaf-edge kernel is dominated by a reference measure $\nu_l$, so $P_l(x,\di y)=p_l(y\mid x)\nu_l(\di y)$.
In the smoothing problem, the child value on this edge is observed as $X_l=x_l$, and the density above is evaluated at the observed value. Given observed leaf values $x_\L$ and fixed root value $X_\0=x_\0$, the smoothing posterior is the probability measure on unobserved internal realizations $x_{\S}$:
\begin{equation}
    \P^{\star}(\di x_{\S}\mid x_\L, x_\0)
    =
    \frac{1}{Z(x_\L,x_\0)}
    \prod_{l\in\L} p_l(x_l\mid x_{\pa{l}})
    \prod_{s\in\S} P_s(x_{\pa{s}},\di x_s),
    \label{eq:smoothing_posterior}
\end{equation}
where $Z(x_\L,x_\0)$ is the marginal likelihood. The inferential task is to characterize this posterior, especially its marginals at unobserved internal vertices. Except in special cases such as linear models, neither the normalizing constant nor posterior marginals in Equation~\eqref{eq:smoothing_posterior} are available in closed form.
\begin{figure}[ht]
    \centering
    \includegraphics[width=1.0\textwidth]{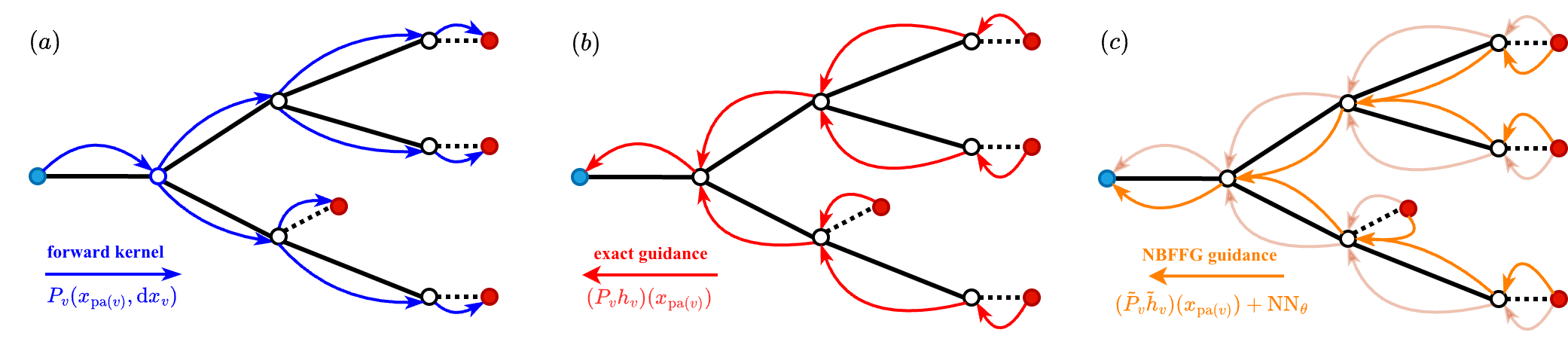}
    \vspace{-10pt}
    \caption{Smoothing on a directed tree with a fixed root $\0$ (blue solid dot). Leaves $\L$ (red solid dots) are observed, internal vertices $\S$ (black hollow dots) are latent, dashed edges denote observation edges, and arrows indicate computation direction: (a) generative sampling from root to leaves; (b) exact backward information filtering from leaves to root; (c) NBFFG framework, where a tractable proxy guide is corrected by a neural residual.}
    \label{fig:nbffg}
\end{figure}

\section{Backward Filtering Forward Guiding}
Under the setup above, exact Bayesian smoothing can be written as message passing on the tree. A backward pass propagates observed leaf information toward the root, producing at each non-leaf vertex the conditional density of its descendant observations. This is the tree analogue of the backward information filter and phylogenetic pruning recursion \citep{felsenstein_evolutionarytreesdna_1981, briers_smoothingalgorithmsstate_2010}, and is equivalently a sum-product or belief-propagation pass on a tree \citep{pearl1988probabilistic,kschischang_factorgraphs_2001}. A subsequent forward pass samples from the root toward the leaves using transitions conditioned on these backward messages.

Let $\S_0:=\S\cup\{\0\}$ denote the vertices that are not observed leaves. For an edge entering an unobserved internal vertex $s\in\S$, the conditioned transition $P^{\star}$ is obtained by a Doob's $h$-transform of the prior transition $P$:
\begin{subequations}
    \begin{align}
        P^{\star}_s(x, \di y)
        &=
        \frac{h_s(y)}{m_s(x)}
        P_s(x, \di y),
        \qquad
        m_s(x):=(P_s h_s)(x), \label{eq:edge_posterior}\\
        h_u(x)
        &=
        \prod_{v\in\ch{u}} m_v(x),
        u\in\S_0, \qquad
        m_v(x)
        :=
        \begin{cases}
            (P_v h_v)(x), & v\in\S,\\
            p_v(x_v\mid x), & v\in\L,
        \end{cases} \label{eq:bif_message}
    \end{align}
\end{subequations}
where the pullback notation $(P_s h_s)$ is defined by Equation~\eqref{eq:kernel_pullback}. Interpret $h_u(x)$ as the density contribution of the observed leaves below $u$ conditioned on $X_u=x$, and $m_v$ as the message sent from child $v$ to its parent. This representation is exact, but evaluating the message $m_s=(P_s h_s)$ is generally intractable. For continuous-time transitions, the same pullback is characterized by the Kolmogorov backward equation:
\begin{equation}
    \left(\frac{\partial}{\partial t} + \A_s \right) \psi_s(t, z) = 0, \qquad \psi_s(T_s, z) = h_s(z), 
    \label{eq:backward_kolmogorov}    
\end{equation}
so that $(P_s h_s)(x) = \psi_s(0, x)$ by the Feynman--Kac formula. Here $\A_s$ is the infinitesimal generator of Equation~\eqref{eq:continuous_edge}, defined by $(\A_s \psi_s)(t, z) := \sum_i [b_s(z)]_i \partial_{i} \psi_s(t, z) + \frac{1}{2}\sum_{i,j} [(\sigma_s\sigma^{\top}_s)(z)]_{ij}\partial^2_{ij} \psi_s(t, z)$.

To address this, \citet{meulenfrankvander_backwardfilteringforward_2025} construct tractable guiding messages $\tilde{m}_s$ by replacing the true transition kernels in the backward pass with proxy kernels $\tilde{P}_s$. These kernels are chosen so that their corresponding pullbacks can be evaluated in closed form:
\begin{equation}
    \tilde{h}_u(x)
    =
    \prod_{v\in\ch{u}} \tilde{m}_v(x),
    u\in\S_0, \qquad
    \tilde{m}_v(x)
    :=
    \begin{cases}
        (\tilde{P}_v \tilde{h}_v)(x), & v\in\S,\\
        p_v(x_v\mid x), & v\in\L.
    \end{cases} \label{eq:guide_message}
\end{equation}
Once $\tilde{h}_s(x)$ is specified for all $s\in\S$, define
\begin{equation}
    P^{\circ}_s(x, \di y)
    :=
    \frac{\tilde{h}_s(y)}{(P_s\tilde{h}_s)(x)}
    P_s(x, \di y)
    \label{eq:guided_transition}
\end{equation}
as the \emph{guided proposal} of the exact smoothing kernel $P^{\star}_s(x, \di y)$. The proxy kernels $\tilde{P}_s$ are used only to build the guide $\tilde{h}_s$; the guided proposal is renormalized using original kernel $P_s$. The ratio between the exact smoothing distribution of the full tree $\P^{\star}$ and the proposal distribution $\P^{\circ}$ induced by $P^{\circ}_s$ can then be evaluated explicitly \citep[Theorem 8]{meulenfrankvander_backwardfilteringforward_2025}:
\begin{equation}
    \frac{\di \P^{\star}}{\di \P^{\circ}}(x_\S)
    =
    \frac{\tilde{h}_\0(x_\0)}{h_\0(x_\0)}
    \prod_{v\in\V^+} \omega_v(x_{\pa{v}}), \qquad
    \omega_v(x)
    =
    \begin{cases}
        (P_v \tilde{h}_v)(x) / (\tilde{P}_v \tilde{h}_v)(x), & v \in \S,\\
        1, & v \in \L.
    \end{cases}
    \label{eq:bffg_weight}
\end{equation}
This ratio can be used as an importance weight in SMC or MCMC-based sampling frameworks. Such a filtering-guiding procedure is referred to as Backward Filtering Forward Guiding (BFFG)\citep{meulenfrankvander_backwardfilteringforward_2025}. The filtering procedure is illustrated in Figure~\ref{fig:nbffg} (b). Directly learning the exact $h$ from prior samples is difficult: descendant evidence may enter at different tree depths and through multiple states, while uninformed prior trajectories rarely visit high-likelihood regions under informative leaves. This motivates using a guided proposal as the base distribution, a point we examine in Section~\ref{sec:exp1_exact_posterior}.

\section{Neural Backward Filtering Forward Guiding}

% \vspace{-5pt}
\subsection{A canonical choice of proxy}
% \vspace{-5pt}
The choice of proxy kernel $\tilde{P}_s$ in BFFG can be nontrivial in general nonlinear models. There is an inherent trade-off between computational efficiency and sampling quality. A simple proxy may lead to closed-form backward messages and an easily sampled guided proposal, but it can also produce high-variance importance weights when $\P^{\circ}$ is far from $\P^{\star}$. While model-specific strategies for proxy selection exist \citep{mider_continuousdiscretesmoothingdiffusions_2021}, we take a different route: in NBFFG, the proxy only needs to provide a cheap global guide, and the remaining mismatch is handled by a neural correction.

For the models considered in this work, we therefore use a canonical proxy that ignores the nonlinear mean $\mu_v$ in Equation~\eqref{eq:discrete_edge} or the drift $b_v$ in Equation~\eqref{eq:continuous_edge}, while retaining a state-independent covariance or diffusion coefficient:
\begin{subequations}
    \label{eq:canonical_proxy}
    \begin{align}
        \text{discrete proxy:}\quad
        &\tilde{P}_v(x, \di y)
        =
        \mathcal{N}(\di y; x, \tilde{\Sigma}_v), \label{eq:discrete_proxy}\\
        \text{continuous proxy:}\quad
        &\di \tilde{Z}_v(t)
        =
        \tilde{\sigma}_v(t) \di \tilde{W}(t),
        \qquad
        \tilde{Z}_v(0)=x. \label{eq:continuous_proxy}
    \end{align}
\end{subequations}
For the continuous proxy, we write $\tilde{P}_{v,t\to T_v}, 0\leq t\leq T_v$ for the proxy transition semigroup from time $t$ to the endpoint $T_v$: $(\tilde{P}_{v,t\to T_v}f)(z)=\mathbb{E}_{\tilde{P}_v}\!\left[f(\tilde{Z}_v(T_v))\mid \tilde{Z}_v(t)=z\right]$. When the leaf-edge observation models are also Gaussian, this proxy yields closed-form guiding messages and closed-form scores for continuous edges.

\begin{proposition}[Gaussian proxy guide]
\label{prop:gaussian_guiding_messages}
Assume Gaussian leaf likelihoods $p_l(x_l\mid x)=\mathcal{N}(x_l;L_lx+\beta_l,R_l)$ and the proxy kernels in Equation~\eqref{eq:canonical_proxy}, with nonsingular information matrices. Then the backward proxy guides remain quadratic:
\[
    \log\tilde{h}_u(x)
    =
    -\frac{1}{2}x^\top\tilde{H}_u x +\tilde{\eta}_u^\top x + \mathrm{const}.
\]
For a continuous edge $(u,v)$, the within-edge guide $\tilde{h}_{v,t}(z):=(\tilde{P}_{v,t\to T_v}\tilde{h}_v)(z)$ is also quadratic, so its score is affine:
\[
    \nabla_z\log\tilde{h}_{v,t}(z)=-\tilde{H}_{v,t}z+\tilde{\eta}_{v,t}.
\]
The information parameters are given by the closed-form backward recursion in Appendix~\ref{app:proofs}.
\end{proposition}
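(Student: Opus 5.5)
The argument is an induction over the tree from the leaves to the root (reverse topological order), using the recursion \eqref{eq:guide_message}. The claim to propagate is that every message $\tilde m_v$ and every guide $\tilde h_u$ is of the form $\exp(-\tfrac12 x^\top H x+\eta^\top x+c)$, with $H$ symmetric.

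\textbf{Base case.} For a leaf $l\in\L$, expanding the Gaussian likelihood gives
\[
\log p_l(x_l\mid x)=-\tfrac12 x^\top L_l^\top R_l^{-1}L_l x+\big(L_l^\top R_l^{-1}(x_l-\beta_l)\big)^\top x+\mathrm{const}.
\]
So $\tilde m_l$ is quadratic with $H=L_l^\top R_l^{-1}L_l$ and $\eta=L_l^\top R_l^{-1}(x_l-\beta_l)$.

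\textbf{Products at a vertex.} For $u\in\S_0$, $\tilde h_u=\prod_{v\in\ch u}\tilde m_v$ is a product of quadratic exponentials. Its log is therefore quadratic, with
\[
\tilde H_u=\sum_{v\in\ch u}H_v,\qquad \tilde\eta_u=\sum_{v\in\ch u}\eta_v .
\]

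\textbf{Pullback along a discrete edge.} For $v\in\S$ with the discrete proxy \eqref{eq:discrete_proxy}, $\tilde m_v(x)=\int \mathcal N(y;x,\tilde\Sigma_v)\tilde h_v(y)\,\di y$. Completing the square in $y$ shows this Gaussian integral is finite exactly when $\tilde\Sigma_v^{-1}+\tilde H_v\succ0$; this is where the nonsingularity hypothesis enters. The result is quadratic in $x$. By the matrix inversion lemma, writing $\tilde\Sigma_v^{-1}+\tilde H_v$ in Woodbury form,
\[
H=(I+\tilde H_v\tilde\Sigma_v)^{-1}\tilde H_v,\qquad \eta=(I+\tilde H_v\tilde\Sigma_v)^{-1}\tilde\eta_v .
\]
These formulas stay valid even when $\tilde H_v$ is singular, which matters because a vertex with low-rank observations below it need not have invertible $\tilde H_v$.

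\textbf{Continuous edge.} The proxy $\tilde Z_v$ is a Gaussian process with independent increments. Hence $\tilde Z_v(T_v)\mid\tilde Z_v(t)=z\sim\mathcal N(z,\tilde\Sigma_{v,t})$, where $\tilde\Sigma_{v,t}=\int_t^{T_v}\tilde\sigma_v\tilde\sigma_v^\top\,\di s$. The within-edge guide $\tilde h_{v,t}$ is then exactly the discrete pullback above with $\tilde\Sigma_v$ replaced by $\tilde\Sigma_{v,t}$. It is quadratic with
\[
\tilde H_{v,t}=(I+\tilde H_v\tilde\Sigma_{v,t})^{-1}\tilde H_v,\qquad \tilde\eta_{v,t}=(I+\tilde H_v\tilde\Sigma_{v,t})^{-1}\tilde\eta_v .
\]
Differentiating $\log\tilde h_{v,t}(z)=-\tfrac12 z^\top\tilde H_{v,t}z+\tilde\eta_{v,t}^\top z+\mathrm{const}$ gives the affine score, using symmetry of $\tilde H_{v,t}$. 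Equivalently, one can verify that these parameters satisfy the Riccati-type ODEs
\[
\tfrac{\di}{\di t}\tilde H_{v,t}=\tilde H_{v,t}\tilde a_v\tilde H_{v,t},\qquad \tfrac{\di}{\di t}\tilde\eta_{v,t}=\tilde H_{v,t}\tilde a_v\tilde\eta_{v,t},\qquad \tilde a_v=\tilde\sigma_v\tilde\sigma_v^\top,
\]
by plugging the quadratic ansatz into the backward Kolmogorov equation \eqref{eq:backward_kolmogorov} for the proxy generator. This serves as a consistency check. Setting $t=0$ gives the message $\tilde m_v=\tilde h_{v,0}$, which closes the induction.

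\textbf{Main obstacle.} The algebra is routine. The delicate point is integrability and invertibility: checking that $I+\tilde H_v\tilde\Sigma$ is invertible and that $\tilde\Sigma^{-1}+\tilde H_v\succ0$ along the recursion. I would handle this by showing inductively that every $\tilde H$ is symmetric positive semidefinite. It holds at the leaves, is preserved by sums, and is preserved by the pullback, since $(I+\tilde H\tilde\Sigma)^{-1}\tilde H=\tilde H^{1/2}(I+\tilde H^{1/2}\tilde\Sigma\tilde H^{1/2})^{-1}\tilde H^{1/2}$. PSD then makes $I+\tilde H\tilde\Sigma$ automatically invertible, and the stated nonsingularity assumption covers any remaining degenerate cases.
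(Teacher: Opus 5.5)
Your proposal is correct and follows essentially the same route as the paper: induction from the leaves, leaf likelihoods in information form, information parameters adding under products, and a Gaussian integral for the pullback, with $\tilde\Sigma_v$ replaced by $\tilde\Sigma_{v,t}$ for the within-edge guide. Your parametrization $(I+\tilde H_v\tilde\Sigma_v)^{-1}\tilde H_v$, $(I+\tilde H_v\tilde\Sigma_v)^{-1}\tilde\eta_v$ equals the paper's $(\tilde H_v^{-1}+\tilde\Sigma_v)^{-1}$, $(\tilde H_v^{-1}+\tilde\Sigma_v)^{-1}\tilde H_v^{-1}\tilde\eta_v$ whenever $\tilde H_v$ is invertible; combined with your PSD induction, it removes the need for the nonsingularity assumption, which the paper uses to write $\tilde h_v\propto\mathcal{N}(y;\tilde H_v^{-1}\tilde\eta_v,\tilde H_v^{-1})$.
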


This gives exactly the quantities needed below: in discrete time, the proxy guide yields the base guided proposal through Equation~\eqref{eq:guided_transition}; in continuous time, its affine score serves as part of the SDE control. We do not require $\tilde{h}_v$ or $\tilde{h}_{v,t}$ to be accurate. Their role is to inject coarse descendant information, while the neural component introduced next learns the residual correction.

% \vspace{-5pt}
\subsection{Variational posterior}
% \vspace{-5pt}

We now define the neural correction at the level of Markov kernels. For each unobserved internal vertex $s\in\S$, let $Q_s^\theta(x_{\pa{s}},\di x_s)$ be a variational transition kernel parameterized by $\theta$, absolutely continuous with respect to the prior transition $P_s(x_{\pa{s}},\di x_s)$. We define the joint variational law as the tree-structured measure
\begin{equation}
    \Q^\theta(\di x_\S\mid x_\0) =
    \prod_{s\in\S} Q_s^\theta(x_{\pa{s}},\di x_s).
    \label{eq:variational_tree_law}
\end{equation}
The leaves are not sampled under $\Q^\theta$; they are observed boundary values and enter the objective defined below through the leaf-edge densities $p_l(x_l\mid x_{\pa{l}})$.

The concrete parameterization of $Q_s^\theta$ will differ between discrete and continuous edges. The common principle is that $Q_s^\theta$ is built by correcting the proxy-induced guide rather than by approximating the posterior from scratch. The following objective, however, does not depend on how the correction is parameterized.

\begin{proposition}[Tree variational object]
\label{prop:tree_elbo}
Assume $Q_s^\theta(x,\cdot)\ll P_s(x,\cdot)$ for all $s\in\S$ and $Q^\theta_s$--a.e. $x$. Define the negative ELBO (NELBO)
\begin{equation}
    \J(\theta)
    =
    \mathbb{E}_{\Q^\theta}
    \left[
        \sum_{s\in\S}
        \log
        \frac{\di Q_s^\theta(X_{\pa{s}},\cdot)}
             {\di P_s(X_{\pa{s}},\cdot)}(X_s)
        -
        \sum_{l\in\L}
        \log p_l(x_l\mid X_{\pa{l}})
    \right].
    \label{eq:abstract_elbo}
\end{equation}
Then
\[
    \J(\theta)
    =
    \kld\!\left(
        \Q^\theta(\cdot\mid x_\0)
        \,\middle\Vert\,
        \P^\star(\cdot\mid x_\L,x_\0)
    \right)
    -
    \log Z(x_\L,x_\0).
\]
Thus minimizing $\J(\theta)$ is equivalent to minimizing the KL divergence from $\Q^\theta$ to the exact smoothing posterior $\P^{\star}$.
\end{proposition}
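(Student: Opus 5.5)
The identity follows from computing the Radon--Nikodym derivative $\di\Q^\theta/\di\P^\star$ explicitly and taking its logarithm under $\Q^\theta$. The natural reference measure is the prior restricted to the internal vertices, $\P_\S(\di x_\S\mid x_\0):=\prod_{s\in\S}P_s(x_{\pa{s}},\di x_s)$. Equation~\eqref{eq:smoothing_posterior} says exactly that $\P^\star\ll\P_\S$, with density
\[
\frac{\di\P^\star}{\di\P_\S}(x_\S)=\frac{1}{Z(x_\L,x_\0)}\prod_{l\in\L}p_l(x_l\mid x_{\pa{l}}).
\]

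The first step is to show that $\Q^\theta\ll\P_\S$ with the product density
\[
\frac{\di\Q^\theta}{\di\P_\S}(x_\S)=\prod_{s\in\S}\frac{\di Q_s^\theta(x_{\pa{s}},\cdot)}{\di P_s(x_{\pa{s}},\cdot)}(x_s).
\]
I would prove this by ordering $\S$ topologically, so every vertex comes after its parent, and disintegrating both measures kernel by kernel. For any product rectangle, integrate the innermost vertex first and replace $Q_s^\theta(x_{\pa{s}},\di x_s)$ by its density times $P_s(x_{\pa{s}},\di x_s)$, then induct outward toward the root. The extension from rectangles to the full product $\sigma$-algebra is a monotone class argument.

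This step is the main technical point. The densities $q_s(x,y):=\frac{\di Q_s^\theta(x,\cdot)}{\di P_s(x,\cdot)}(y)$ must be chosen jointly measurable in $(x,y)$, which holds for standard Borel spaces such as $\R^d$ and path space by the standard measurable Radon--Nikodym theorem for kernels. In addition, absolute continuity is only assumed for $Q^\theta$-a.e.\ parent value. This is harmless because the induction integrates only over parent values already drawn from $\Q^\theta$; the density may be set to $1$ on the exceptional null set without changing anything.

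Next, the chain rule gives $\Q^\theta\ll\P^\star$ wherever $\prod_l p_l(x_l\mid x_{\pa{l}})>0$, with
\[
\log\frac{\di\Q^\theta}{\di\P^\star}=\sum_{s\in\S}\log q_s(x_{\pa{s}},x_s)-\sum_{l\in\L}\log p_l(x_l\mid x_{\pa{l}})+\log Z(x_\L,x_\0).
\]
If the likelihood vanishes on a set of positive $\Q^\theta$-mass, then both the KL divergence and $\J(\theta)$ equal $+\infty$, so the identity still holds in the extended sense.

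Taking the $\Q^\theta$-expectation, the left side becomes $\kld(\Q^\theta\Vert\P^\star)$ and the right side becomes $\J(\theta)+\log Z$, using that $Z$ is constant in $x_\S$. Rearranging gives the claimed identity. The only further care needed is integrability: the expectation is split term by term only when the parts are not of the form $\infty-\infty$, which I would simply assume, as is implicit in $\J$ being well defined.

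The equivalence of minimizers is then immediate, since $\log Z(x_\L,x_\0)$ does not depend on $\theta$.
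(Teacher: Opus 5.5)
Your proposal is correct and follows the same route as the paper's proof: take the prior $\prod_{s\in\S}P_s(x_{\pa{s}},\di x_s)$ on the internal vertices as the common reference measure, write $\di\Q^\theta/\di\P$ as the product of edge-wise Radon--Nikodym derivatives and $\di\P^\star/\di\P$ as the normalized leaf likelihood, and take the $\Q^\theta$-expectation of the log of their ratio. The extra care you add fills in details the paper leaves implicit without changing the argument: jointly measurable edge densities, the $\Q^\theta$-null exceptional set of parent values, the zero-likelihood case, and integrability.
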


This objective makes the role of the proxy explicit. The target posterior and the NELBO are defined using the true transitions $P_s$ and the true leaf-edge densities. The proxy guide $\tilde{h}$ enters only through the parameterization of $Q_s^\theta$. We then specialize to two different edge dynamics---discrete and continuous---leading to different variational implementations.

% \vspace{-5pt}
\paragraph{Discrete transitions}
For a discrete internal edge $s\in\S$, the proxy guide gives an explicit Gaussian base. Suppose
$P_s(x,\di y)=\mathcal{N}(\di y;\mu_s(x),\Sigma_s(x))$ and
$\log\tilde{h}_s(y)=\mathrm{const}+\tilde{\eta}_s^\top y-\frac{1}{2}y^\top\tilde{H}_s y$.
Then Equation~\eqref{eq:guided_transition} gives $P_s^\circ(x,\di y)=\mathcal{N}(\di y;\mu_s^\circ(x),\Sigma_s^\circ(x))$, with
\begin{equation}
    \Sigma_s^\circ(x) =
    \left(\Sigma_s(x)^{-1}+\tilde{H}_s\right)^{-1},
    \qquad
    \mu_s^\circ(x) =
    \Sigma_s^\circ(x)
    \left(\Sigma_s(x)^{-1}\mu_s(x)+\tilde{\eta}_s\right).
    \label{eq:discrete_guided_gaussian}
\end{equation}
We parameterize $Q_s^\theta(\cdot\mid x)$ as a neural Gaussian mixture that is initialized at this guided base and then corrected by neural residuals. Increasing the number of mixture components lets the variational transition represent skewed or multimodal smoothing structure, while the NELBO only requires sampling from $Q_s^\theta$ and evaluating $\log q_s^\theta-\log p_s$. The exact network parameterization is given in Appendix~\ref{app:edgewise_corrections}.

% \vspace{-5pt}
\paragraph{Continuous transitions}
For a continuous edge, the variational object is a path measure on $\mathcal{C}([0,T_s],\R^d)$. We use the same notation $Q_s^\theta$ for this path law; its endpoint pushforward gives the vertex-level transition appearing in Equation~\eqref{eq:variational_tree_law}. Let $a_s(z)=\sigma_s(z)\sigma_s(z)^\top$. We define a controlled SDE
\begin{equation}
    \di Z_s(t)
    =
    \left[
        b_s(Z_s(t))
        +
        a_s(Z_s(t))g_s^\theta(t,Z_s(t))
    \right]\di t
    +
    \sigma_s(Z_s(t))\di W_t,
    \qquad
    Z_s(0)=x_{\pa{s}},
    \label{eq:variational_sde}
\end{equation}
where $W$ is a $\Q^\theta$-Wiener process. The control decomposes into the fixed proxy score $\nabla_z\log\tilde{h}_{s,t}$ and a neural residual $r_s^\theta$,
\begin{equation}
    g_s^\theta(t,z)
    =
    \nabla_z\log\tilde{h}_{s,t}(z)
    +
    r_s^\theta(t,z)
    =
    -\tilde{H}_{s,t}z+\tilde{\eta}_{s,t}
    +
    r_s^\theta(t,z).
    \label{eq:neural_control}
\end{equation}
Under standard Novikov-type conditions, and assuming $a_s(z)$ is positive definite uniformly in $z$. The Girsanov's theorem gives the edge-wise KL contribution
\begin{equation}
    \kld(Q_s^\theta\Vert P_s)
    =
    \frac{1}{2}
    \mathbb{E}_{Q_s^\theta}
    \left[
        \int_0^{T_s}
        g_s^\theta(t,Z_s(t))^\top
        a_s(Z_s(t))
        g_s^\theta(t,Z_s(t))
        \di t
    \right].
    \label{eq:continuous_edge_kl}
\end{equation}
The affine proxy score steers paths toward descendant observations, and the residual field $r_s^\theta$ learns the nonlinear correction needed by the exact posterior; implementation details are given in Appendix~\ref{app:edgewise_corrections}. Thus, in both regimes, the target posterior and ELBO use the true generative model, while the proxy supplies only a structured base for neural correction.

\subsection{Tree-path subsampling}

The tree NELBO is a sum of local transition and leaf-edge costs. Evaluating every term at each optimization step can be expensive on large trees, especially for continuous edges where each transition requires simulating a controlled path. NBFFG therefore uses topology-level path subsampling: draw a mini-batch of observed leaves $B\subseteq\L$ and simulate only the activated internal vertices $A(B)=\{s\in\S:\exists l\in B,\ s\preceq l\}$, i.e., the union of their root-to-leaf paths; let $A_0(B)=A(B)\cup\{\0\}$. Let $C_s^\theta$ be the internal-edge contribution to the ELBO: in discrete time, $C_s^\theta=\log q_s^\theta(X_s\mid X_{\pa{s}})-\log p_s(X_s\mid X_{\pa{s}})$, and in continuous time it is the control-energy term in Equation~\eqref{eq:continuous_edge_kl}. For an observed leaf, let $C_l=-\log p_l(x_l\mid X_{\pa{l}})$. With inclusion probabilities $\pi_s=\mathbb{P}\{s\in A(B)\}$ and $\rho_l=\mathbb{P}\{\pa{l}\in A_0(B)\}$, we optimize the path loss
\begin{equation}
    \widehat{\J}_{B}(\theta)
    =
    \sum_{s\in A(B)}
    \frac{C_s^\theta}{\pi_s}
    +
    \sum_{\substack{l\in\L\\\pa{l}\in A_0(B)}}
    \frac{C_l}{\rho_l}
    \label{eq:path_subsampled_elbo}
\end{equation}
which is an unbiased estimator of the full-tree objective under the sampled path-union law. The first sum is the standard Horvitz--Thompson correction for sampled internal edges. The second sum is an optional Rao--Blackwellization weighted by the inclusion probability of selected leaf parents: it evaluates all available observed children of each sampled leaf parent, reducing variance without changing the sampled internal path union. Appendix~\ref{app:path_subsampling_details} gives the formal statement, implementation details, and proof.

% \vspace{-5pt}
\subsection{Amortized correction}
% \vspace{-5pt}

The neural correction can be parameterized independently on every edge, but this scales poorly and prevents information sharing across related branches. We instead use shared parameters with edge-specific context:
\[
    Q_s^\theta(\cdot\mid x)
    =
    Q^\theta(\cdot\mid x;\psi_s),
    \qquad s\in\S,
\]
where $\psi_s$ summarizes the root-to-node structural context of edge $s$. In discrete time, the shared network predicts residual mixture parameters around the guided Gaussian base in Equation~\eqref{eq:discrete_guided_gaussian}; in continuous time, it predicts the residual control $r_s^\theta$ in Equation~\eqref{eq:neural_control}. The proxy supplies coarse descendant information, while the amortized neural component learns corrections reused across sampled and unsampled branches. Appendix~\ref{app:edgewise_corrections} gives the explicit constructions of amortized correction networks for discrete and continuous edges, respectively, and Appendix~\ref{app:amortized_correction_details} describes how the structural context $\psi_s$ is implemented.

% \vspace{-5pt}
\section{Experiments}
% \vspace{-5pt}

\subsection{Gaussian tree benchmarks with exact references}
\label{sec:exp1_exact_posterior}

We first use Gaussian tree models where exact smoothing references are available. Benchmarks~1A and~1C use heterogeneous linear-Gaussian edges \(X_v=A_vX_{\pa{v}}+b_v+\epsilon_v\), \(\epsilon_v\sim\mathcal{N}(0,Q)\). Benchmarks~1B and~1D use heterogeneous Ornstein--Uhlenbeck edges \(\di Z_v(t)=B_v(\theta_v-Z_v(t))\di t+\sigma \di W_t\), whose closed-form endpoint transitions also give exact Gaussian tree smoothing references. Benchmarks~1A--1B test proxy misspecification, while 1C--1D isolate path subsampling under the canonical proxy. We use five independent observation instances for the proxy benchmarks and report metrics from Equation~\eqref{eq:exp1_metrics}; full model, topology, and training details are in Appendices~\ref{app:exp1a_details}--\ref{app:exp1d_details}.

For proxy correction, we compare the optimal matching-transition proxy with canonical and sign-perturbed misspecified proxies, and include a no-guidance ablation trained from the prior without proxy messages. Table~\ref{tab:exp1_core} shows that neural correction sharply reduces the relative NELBO gap in the discrete benchmark and gives a consistent reduction in the continuous benchmark. The no-guidance ablation remains worse after correction in both cases, supporting the need for guided proposals. KL diagnostics show the same effect for most misspecified proxies; complete empirical posterior mean and covariance diagnostics are in Tables~\ref{tab:exp1a_proxy_correction} and~\ref{tab:exp1b_proxy_correction}. For continuous OU edges, the corrected optimal-proxy entries are nonzero because the controlled SDE objective is evaluated under a fixed Euler--Maruyama discretization, so they should be read as discretization- and Monte Carlo-limited reference values rather than model misspecification.

\begin{table}[t]
\centering
\caption{Exact-posterior benchmark results. The metrics before (Uncorr.) and after (Corr.) training are reported. Values are mean \(\pm\) std. over five independent runs with different seeds. The proxy column uses \((\tilde A,\tilde b)\) for the discrete LG model and \((\tilde B,\tilde\theta)\) for the OU model. We show only \(\Delta_{\mathrm{rel}}\) and \(\mathrm{KL}_{\mathrm{avg}}\) from Equation~\eqref{eq:exp1_metrics}; complete diagnostics are reported in Tables~\ref{tab:exp1a_proxy_correction} and~\ref{tab:exp1b_proxy_correction}.}
\label{tab:exp1_core}
\scriptsize
\resizebox{\textwidth}{!}{%
\begin{tabular}{@{}c@{\qquad}c@{}}
\textbf{1A: Discrete Linear Gaussian} & \textbf{1B: Continuous Heterogeneous OU} \\[2pt]
\begin{tabular}{@{}lrrrr@{}}
\toprule
& \multicolumn{2}{c}{\(\Delta_{\mathrm{rel}}\downarrow\)}
& \multicolumn{2}{c}{\(\mathrm{KL}_{\mathrm{avg}}\) \(\downarrow\)} \\
\cmidrule(lr){2-3}\cmidrule(l){4-5}
Proxy & Uncorr. & Corr. & Uncorr. & Corr. \\
\midrule
No guidance & \(1.70{\times}10^4\pm7.34{\times}10^3\) & \(96.659\pm15.237\) & \(9.670\pm2.548\) & \(75.495\pm24.010\) \\
\((A,b)\) & \(0.000\pm0.000\) & \(0.002\pm0.001\) & \(0.056\pm0.004\) & \(0.060\pm0.010\) \\
\((I,0)\) \eqref{eq:discrete_proxy} & \(0.913\pm0.230\) & \(0.035\pm0.011\) & \(1.861\pm0.096\) & \(0.118\pm0.004\) \\
\((-A,b)\) & \(1.227\pm0.322\) & \(0.016\pm0.005\) & \(1.677\pm0.602\) & \(0.080\pm0.017\) \\
\((A,-b)\) & \(3.352\pm0.800\) & \(0.046\pm0.015\) & \(6.559\pm0.051\) & \(0.096\pm0.022\) \\
\((-A,-b)\) & \(4.482\pm0.743\) & \(0.078\pm0.040\) & \(6.690\pm1.073\) & \(0.111\pm0.020\) \\
\bottomrule
\end{tabular}%
&
\begin{tabular}{@{}lrrrr@{}}
\toprule
& \multicolumn{2}{c}{\(\Delta_{\mathrm{rel}}\downarrow\)}
& \multicolumn{2}{c}{\(\mathrm{KL}_{\mathrm{avg}}\) \(\downarrow\)} \\
\cmidrule(lr){2-3}\cmidrule(l){4-5}
Proxy & Uncorr. & Corr. & Uncorr. & Corr. \\
\midrule
No guidance & \(29.245\pm17.626\) & \(0.605\pm0.226\) & \(11.904\pm1.922\) & \(0.144\pm0.022\) \\
\((B,\theta)\) & \(0.892\pm0.583\) & \(0.213\pm0.075\) & \(0.261\pm0.051\) & \(0.048\pm0.011\) \\
\((0,0)\) \eqref{eq:continuous_proxy} & \(1.273\pm0.724\) & \(0.262\pm0.104\) & \(0.387\pm0.121\) & \(0.062\pm0.008\) \\
\((DBD,\theta)\) & \(1.294\pm0.648\) & \(0.214\pm0.073\) & \(0.431\pm0.051\) & \(0.045\pm0.004\) \\
\((B,-\theta)\) & \(5.122\pm1.722\) & \(0.244\pm0.092\) & \(1.381\pm0.208\) & \(0.056\pm0.008\) \\
\((DBD,-\theta)\) & \(3.955\pm1.470\) & \(0.264\pm0.111\) & \(1.105\pm0.105\) & \(0.062\pm0.009\) \\
\bottomrule
\end{tabular}%
\end{tabular}%
}
\end{table}

For path subsampling, Benchmarks~1C--1D in Appendices~\ref{app:exp1c_details}--\ref{app:exp1d_details} show the expected accuracy--speed trade-off. Very aggressive leaf mini-batches with few sampled leaves are noisy on the wider discrete tree, while moderate mini-batches retain accuracy with a useful speedup. The continuous OU benchmark is more tolerant of aggressive subsampling and benefits more in wall-clock time because each skipped branch avoids many Euler--Maruyama steps. Rao--Blackwellized terminal sampling behaves as an optional implementation improvement rather than a separate modeling contribution; the full curves and tree diagrams are in Appendices~\ref{app:exp1c_details} and~\ref{app:exp1d_details}.

% \vspace{-5pt}
\subsection{Nonlinear multimodal smoothing}
\label{sec:exp2_nonlinear}
% \vspace{-5pt}

We next test whether the discrete Gaussian-mixture correction can represent posterior multimodality induced by nonlinear hidden dynamics. Benchmark~2A uses a fixed super-root $\0$ and an inferred stochastic root $\rho\in\S$, whose children follow the folded nonlinear Gaussian transition \(X_v=f(X_\rho)+\epsilon_v\), \(f(x)=(x_1^2,x_2^2)\), with all linear Gaussian observation leaves fixed at \((1,1)\). The four sign choices of $X_\rho$ are therefore all plausible. We also deploy a two-dimensional finite grid to evaluate the analytical posterior, which gives a reference with four equiprobable quadrant modes. Details are given in Appendix~\ref{app:exp2a_details}.

Figure~\ref{fig:exp2a_root_modes} shows that the raw guide and the single-Gaussian correction (by setting $K=1$ in Equation~\eqref{eq:app_discrete_neural_mixture}) collapse to one quadrant, while NBFFG with four mixture components ($K=4$) recovers the four-mode root posterior in the representative run shown here. An unguided Gaussian mixture is less stable across independent runs, where the learned variational deviates far from the reference; full diagnostics are reported in Table~\ref{tab:exp2a_multimodal}. Overall, this isolates the role of the guided Gaussian-mixture correction in representing multimodal discrete smoothing posteriors.

\begin{figure}[t]
    \centering
    \includegraphics[width=\textwidth]{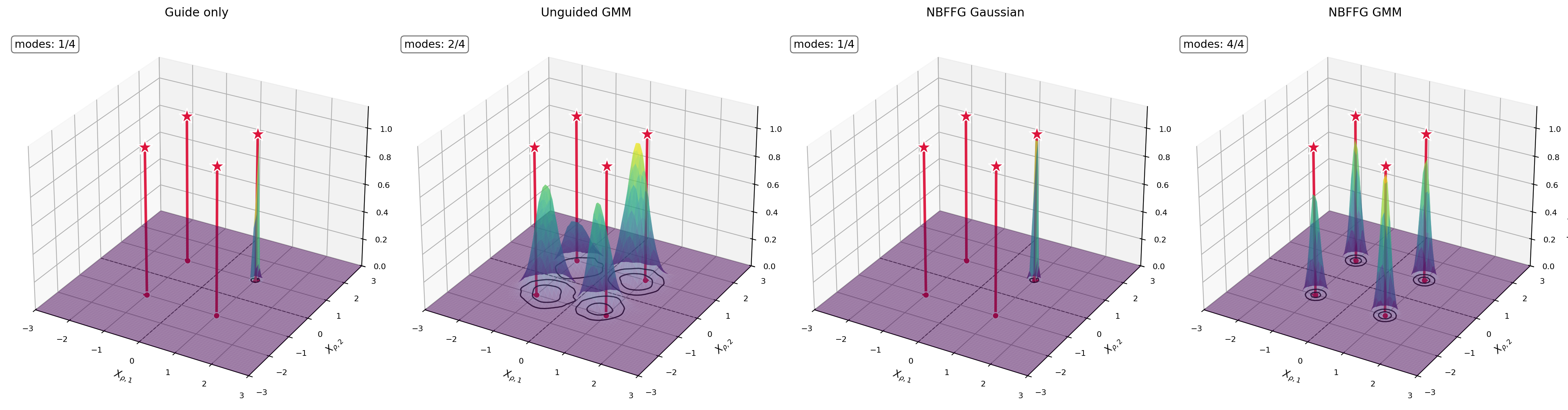}
    \caption{Inferred-root posterior samples for Benchmark~2A. The folded nonlinear transition creates four equiprobable posterior modes for $X_\rho$ centered at four red pillars, and the mesh surfaces represent empirical distributions evaluated on a finite grid over $[-3,3]^2$. The contours on the $xy$--plane indicate the projections of the distributions. NBFFG with a Gaussian-mixture correction $(K=4)$ recovers the multimodal posterior, while the raw guide and single-Gaussian ($K=1$) correction collapse to one quadrant.}
    % \vspace{-15pt}
    \label{fig:exp2a_root_modes}
\end{figure}

Benchmark~2B moves to continuous nonlinear smoothing. Each hidden edge follows the double-well diffusion \(\di Z_v(t)=-4\alpha Z_v(t)(Z_v(t)^2-1)\di t+\sigma \di W_t\), with Gaussian observation leaves attached to terminal latent vertices. We use two observation regimes. The early-commitment regime requires the two main branches to separate near the root, while the bimodal regime leaves competing explanations for the branch assignments. Because no exact reference is available, we treat this as a stress test and compare proposal efficiency and path diagnostics rather than reference errors; details are given in Appendix~\ref{app:exp2b_details}.

Figure~\ref{fig:exp2b_paths} shows the resulting posterior path samples. In the early-commitment regime, NBFFG propagates leaf evidence into an early branch separation and improves normalized ESS from \(0.273\pm0.009\) for guide only and \(0.271\pm0.016\) for guided SMC to \(0.529\pm0.103\). The bimodal regime is substantially harder: the preconditioned Crank--Nicolson (guided pCN) collapses to one path-space mode and guided SMC covers modes only with severe path degeneracy, while NBFFG is the only method that combines mode coverage with visibly diverse independent paths. Its normalized ESS remains low but is the largest among the compared proposals, \(0.0076\pm0.0058\), indicating that robust multimodal path-space correction remains challenging.

\begin{figure}[t]
    \centering
    \includegraphics[width=\textwidth]{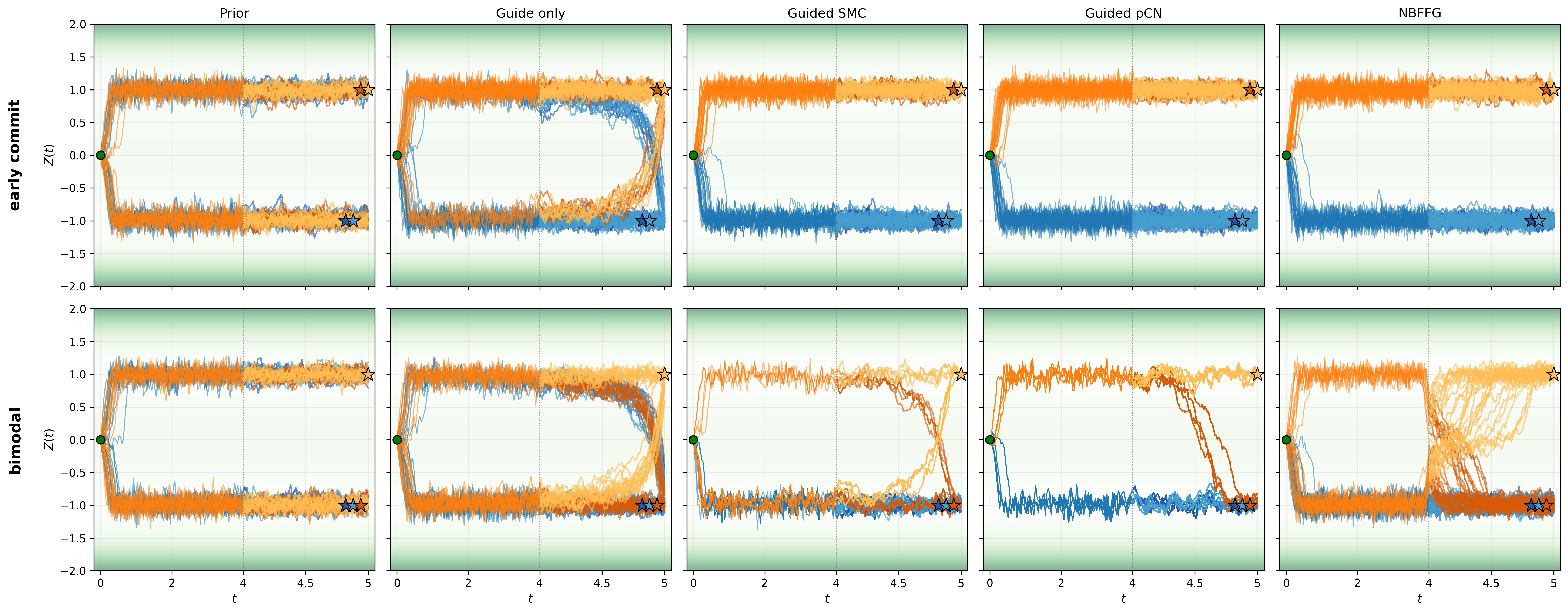}
    \caption{Posterior path samples for Benchmark~2B. The composite figure shows early-commitment and bimodal observation regimes. Paths with different colors indicate the corresponding edges. Observations are marked by stars in the colors of their corresponding edges. Each panel displays 30 paths. In the early-commit regime, Guided pCN/SMC and NBFFG give consistent results; in the bimodal regime, guided pCN collapses to one mode and guided SMC suffers path degeneracy, while NBFFG better preserves both mode coverage and path diversity. The stand-alone guided proposal fails to recover the posterior dynamics in both regimes.}
    % \vspace{-15pt}
    \label{fig:exp2b_paths}
\end{figure}

% \vspace{-5pt}
\subsection{Phylogenetic butterfly shape inference}
\label{sec:exp3_butterfly}
% \vspace{-5pt}

As a real-world application, we apply NBFFG to phylogenetic shape inference on butterfly wing outlines. We add a fixed super-root $\0$ and infer the original phylogenetic root $\rho\in\S$. The latent state is a high-dimensional landmark representation of wing shape, and observed extant species are attached as leaf observations on the phylogeny. Unlike the previous benchmarks, no exact posterior is available; the goal is therefore to assess whether NBFFG can produce observation-conditioned ancestral reconstructions on a realistic nonlinear tree-structured problem. Details are given in Appendix~\ref{app:exp3_butterfly_details}.

Figure~\ref{fig:butterfly_phylogeny} shows posterior mean shapes sampled by NBFFG at internal ancestral nodes. The dashed gray curves show the unconditional forward-prior mean, while the blue curves show the NBFFG posterior mean conditioned on observed leaf shapes. The posterior reconstructions are pulled toward descendant observations across the tree, including at the inferred most recent common ancestor, whereas the unconditional prior does not incorporate the leaf evidence. This qualitative result demonstrates that NBFFG scales beyond synthetic benchmarks to high-dimensional nonlinear phylogenetic smoothing.

\begin{figure}[t]
    \centering
    \includegraphics[width=\textwidth]{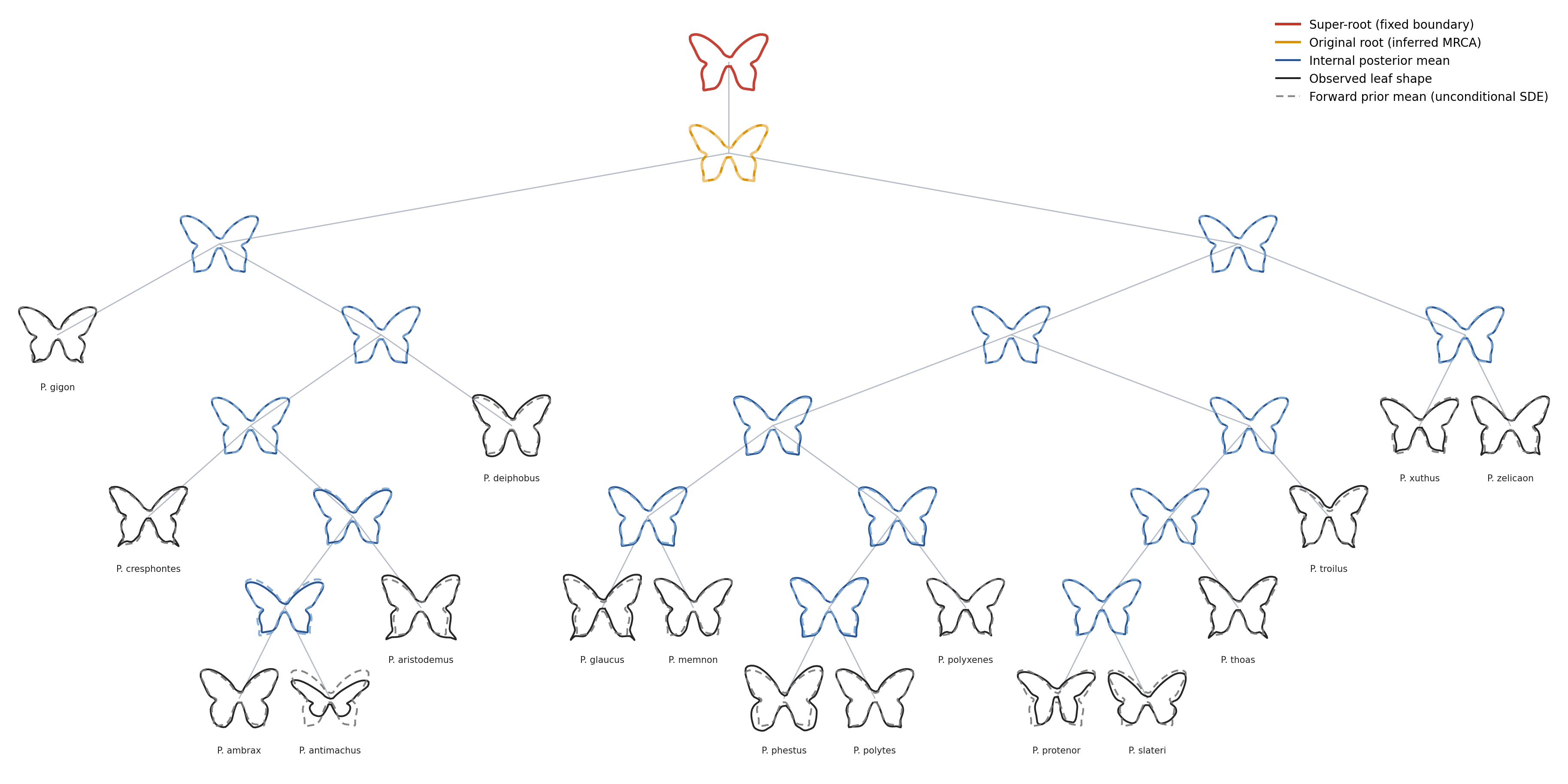}
    \caption{Phylogenetic butterfly shape inference. Observed extant species are shown in black, NBFFG posterior means of 200 independent samples at internal nodes in blue, and unconditional forward-prior means of 200 independent samples in dashed gray. NBFFG produces ancestral shape reconstructions conditioned on the observed leaves, while the prior mean ignores descendant evidence.}
    % \vspace{-15pt}
    \label{fig:butterfly_phylogeny}
\end{figure}

% \vspace{-5pt}
\section{Related Work}
% \vspace{-10pt}
\paragraph{State-space smoothing and message passing}
For linear-Gaussian state-space models, the Rauch--Tung--Striebel smoother gives exact smoothing recursions \citep{rauch_maximumlikelihood_1965}; see \citet[Chapter~12]{sarkka_bayesianfilteringsmoothing_2023} for a modern treatment. Forward filtering--backward sampling was introduced for Bayesian state-space inference by \citet{carter1994gibbs} and \citet{fruhwirth1994data}. On trees, the same backward-then-forward structure appears as Pearl's belief propagation \citep{pearl1988probabilistic}, the sum-product algorithm on factor graphs \citep{kschischang_factorgraphs_2001}, and Felsenstein's pruning recursion in phylogenetics \citep{felsenstein_evolutionarytreesdna_1981}. Linear-Gaussian and pairwise Markov tree variants have also been studied in multiscale estimation and signal processing \citep{chou_multiscalerecursiveestimation_1994,desbouvries_kalmanfilteringpairwise_2006}.

% \vspace{-10pt}
\paragraph{Particle smoothing, twisting, and variational objectives}
For nonlinear or non-Gaussian state-space models, particle smoothers provide a general simulation-based alternative. Two-filter smoothing \citep{briers_smoothingalgorithmsstate_2010} and twisted particle filters \citep{whiteley_twistedparticlefilters_2014} make explicit the role of future information in reducing weight degeneracy. Subsequent work approximates optimal twists iteratively \citep{guarniero_iteratedauxiliaryparticle_2017}, casts twisting as stochastic control \citep{heng_controlledsequentialmonte_2020}, or learns proposals and Doob-type transformations \citep{paige2016inference,chopin_computationaldoobhtransforms_2023}. Related variational SMC objectives include FIVO and VSMC \citep{maddison_filteringvariationalobjectives_2017,naesseth_variationalsequentialmonte_2018}, while SI$\chi$O and NAS-$\chi$ learn smoothing-aware twists within reweighted wake-sleep objectives \citep{lawson_sihosmoothinginference_2022,lawson_nashneuraladaptive_2023}.

% \vspace{-10pt}
\paragraph{Diffusion bridges and guided smoothing}
Diffusion smoothing is often formulated as a bridge problem. Classical guided proposals add an auxiliary drift that steers paths toward future observations and then correct the mismatch by a change of measure \citep{delyon_simulationconditioneddiffusions_2006,schauer_guidedproposalssimulating_2017}; related continuous-discrete smoothing and control perspectives are developed by \citet{mider_continuousdiscretesmoothingdiffusions_2021} and \citet{eklund_guidedsmoothingcontrol_2026}. BFFG extends this backward-filtering forward-guiding principle to directed trees and DAGs \citep{meulenfrankvander_backwardfilteringforward_2025}. Recent learning-based bridge samplers use variational diffusion objectives \citep{archambeau_variationalinference_2007}, score matching \citep{heng_simulatingdiffusionbridges_2025,baker_scorematchingbridges_2025}, control-consistency losses \citep{howard_controlconsistencylosses_2025}, or neural guided controls \citep{yang_neuralguideddiffusion_2025}. NBFFG builds on BFFG and neural guided diffusion bridges, but differs by using a deliberately simple proxy guide, and learns a neural correction over the whole tree.

% \vspace{-10pt}
\section{Conclusion}
% \vspace{-5pt}

We introduced Neural Backward Filtering Forward Guiding, a variational framework for smoothing on tree-structured stochastic processes. The central idea is to separate global information propagation from local posterior correction: a tractable proxy model provides closed-form backward messages and a guided proposal, while a neural correction repairs the mismatch between this guide and the true posterior. This gives a unified construction for discrete tree transitions and continuous diffusion edges, with Gaussian-mixture corrections in discrete time and controlled SDEs in continuous time.

Experiments show that neural correction substantially improves deliberately misspecified guides in exact-posterior LG and OU benchmarks, while guidance remains important compared with training from the prior. The same construction represents nonlinear multimodal smoothing posteriors and scales through path-level subsampling and amortized correction, as illustrated by high-dimensional phylogenetic butterfly shape inference.

The current construction also leaves clear extensions. Path subsampling avoids full-tree updates, but training still backpropagates through sampled root-to-leaf paths; a Bellman-style actor-critic formulation could further localize learning through edge-wise corrections and propagated value information. Amortization is inductive only to a degree: new observations on a fixed topology can be handled by recomputing filtering information, while transfer across unseen topologies calls for correction networks with stronger topology-aware representations.

% \newpage
\bibliographystyle{unsrtnat}
\bibliography{reference.bib}

@inproceedings{archambeau_variationalinference_2007,
  author    = {Archambeau, C{\'e}dric and Opper, Manfred and Shen, Yuan and Cornford, Dan and Shawe-Taylor, John},
  title     = {Variational Inference for Diffusion Processes},
  booktitle = {Advances in Neural Information Processing Systems 20},
  year      = {2007},
  url       = {https://papers.nips.cc/paper/3282-variational-inference-for-diffusion-processes}
}

@inproceedings{baker_scorematchingbridges_2025,
  author    = {Baker, Elizabeth L. and Schauer, Moritz and Sommer, Stefan},
  title     = {Score Matching for Bridges without Learning Time-Reversals},
  booktitle = {Proceedings of the 28th International Conference on Artificial Intelligence and Statistics},
  series    = {Proceedings of Machine Learning Research},
  volume    = {258},
  pages     = {775--783},
  year      = {2025},
  publisher = {PMLR},
  url       = {https://proceedings.mlr.press/v258/baker25a.html},
  doi       = {10.48550/arXiv.2407.15455}
}

@book{bovier_gaussian_2017,
  author    = {Bovier, Anton},
  title     = {Gaussian Processes on Trees: From Spin Glasses to Branching Brownian Motion},
  publisher = {Cambridge University Press},
  year      = {2017},
  doi       = {10.1017/9781316675779}
}

@article{briers_smoothingalgorithmsstate_2010,
  author  = {Briers, Mark and Doucet, Arnaud and Maskell, Simon},
  title   = {Smoothing Algorithms for State-Space Models},
  journal = {Annals of the Institute of Statistical Mathematics},
  volume  = {62},
  number  = {1},
  pages   = {61--89},
  year    = {2010},
  doi     = {10.1007/s10463-009-0236-2}
}

@article{butler_phylogenetic_2004,
  author  = {Butler, Marguerite A. and King, Aaron A.},
  title   = {Phylogenetic Comparative Analysis: A Modeling Approach for Adaptive Evolution},
  journal = {The American Naturalist},
  volume  = {164},
  number  = {6},
  pages   = {683--695},
  year    = {2004},
  doi     = {10.1086/426002}
}

@article{carter1994gibbs,
  author  = {Carter, Chris K. and Kohn, Robert},
  title   = {On Gibbs Sampling for State Space Models},
  journal = {Biometrika},
  volume  = {81},
  number  = {3},
  pages   = {541--553},
  year    = {1994},
  doi     = {10.1093/biomet/81.3.541}
}

@inproceedings{chopin_computationaldoobhtransforms_2023,
  author    = {Chopin, Nicolas and Fulop, Andras and Heng, Jeremy and Thiery, Alexandre H.},
  title     = {Computational Doob h-Transforms for Online Filtering of Discretely Observed Diffusions},
  booktitle = {Proceedings of the 40th International Conference on Machine Learning},
  series    = {Proceedings of Machine Learning Research},
  volume    = {202},
  pages     = {5904--5923},
  year      = {2023},
  publisher = {PMLR},
  url       = {https://proceedings.mlr.press/v202/chopin23a.html}
}

@article{chou_multiscalerecursiveestimation_1994,
  author  = {Chou, K. C. and Willsky, A. S. and Benveniste, A.},
  title   = {Multiscale Recursive Estimation, Data Fusion, and Regularization},
  journal = {IEEE Transactions on Automatic Control},
  volume  = {39},
  number  = {3},
  pages   = {464--478},
  year    = {1994},
  doi     = {10.1109/9.280746}
}

@article{delyon_simulationconditioneddiffusions_2006,
  author  = {Delyon, Bernard and Hu, Ying},
  title   = {Simulation of Conditioned Diffusion and Application to Parameter Estimation},
  journal = {Stochastic Processes and their Applications},
  volume  = {116},
  number  = {11},
  pages   = {1660--1675},
  year    = {2006},
  doi     = {10.1016/j.spa.2006.04.004}
}

@article{desbouvries_kalmanfilteringpairwise_2006,
  author  = {Desbouvries, Fran{\c c}ois and Lecomte, Jean and Pieczynski, Wojciech},
  title   = {Kalman Filtering in Pairwise Markov Trees},
  journal = {Signal Processing},
  volume  = {86},
  number  = {5},
  pages   = {1049--1054},
  year    = {2006},
  doi     = {10.1016/j.sigpro.2005.07.026}
}

@article{eklund_guidedsmoothingcontrol_2026,
  author  = {Eklund, Oskar and Lang, Annika and Schauer, Moritz},
  title   = {Guided Smoothing and Control for Diffusion Processes},
  journal = {Stochastic Processes and their Applications},
  volume  = {192},
  pages   = {104806},
  year    = {2026},
  doi     = {10.1016/j.spa.2025.104806}
}

@article{felsenstein_evolutionarytreesdna_1981,
  author  = {Felsenstein, Joseph},
  title   = {Evolutionary Trees from DNA Sequences: A Maximum Likelihood Approach},
  journal = {Journal of Molecular Evolution},
  volume  = {17},
  number  = {6},
  pages   = {368--376},
  year    = {1981},
  doi     = {10.1007/BF01734359}
}

@article{fruhwirth1994data,
  author  = {Fr{\"u}hwirth-Schnatter, Sylvia},
  title   = {Data Augmentation and Dynamic Linear Models},
  journal = {Journal of Time Series Analysis},
  volume  = {15},
  number  = {2},
  pages   = {183--202},
  year    = {1994},
  doi     = {10.1111/j.1467-9892.1994.tb00184.x}
}

@article{guarniero_iteratedauxiliaryparticle_2017,
  author  = {Guarniero, Pieralberto and Johansen, Adam M. and Lee, Anthony},
  title   = {The Iterated Auxiliary Particle Filter},
  journal = {Journal of the American Statistical Association},
  volume  = {112},
  number  = {520},
  pages   = {1636--1647},
  year    = {2017},
  doi     = {10.1080/01621459.2016.1222291}
}

@article{heng_controlledsequentialmonte_2020,
  author  = {Heng, Jeremy and Bishop, Adrian N. and Deligiannidis, George and Doucet, Arnaud},
  title   = {Controlled Sequential Monte Carlo},
  journal = {The Annals of Statistics},
  volume  = {48},
  number  = {5},
  pages   = {2904--2929},
  year    = {2020},
  doi     = {10.1214/19-AOS1914}
}

@article{heng_simulatingdiffusionbridges_2025,
  author  = {Heng, Jeremy and De Bortoli, Valentin and Doucet, Arnaud and Thornton, James},
  title   = {Simulating Diffusion Bridges with Score Matching},
  journal = {Biometrika},
  volume  = {112},
  number  = {4},
  pages   = {asaf048},
  year    = {2025},
  doi     = {10.1093/biomet/asaf048}
}

@inproceedings{howard_controlconsistencylosses_2025,
  author    = {Howard, Samuel and N{\"u}sken, Nikolas and Pidstrigach, Jakiw},
  title     = {Control Consistency Losses for Diffusion Bridges},
  booktitle = {Frontiers in Probabilistic Inference: Sampling Meets Learning, NeurIPS 2025 Workshop},
  year      = {2025},
  eprint    = {2512.05070},
  archivePrefix = {arXiv},
  primaryClass  = {stat.ML},
  doi       = {10.48550/arXiv.2512.05070}
}

@article{huelsenbeck_stochastic_2003,
  author  = {Huelsenbeck, John P. and Nielsen, Rasmus and Bollback, Jonathan P.},
  title   = {Stochastic Mapping of Morphological Characters},
  journal = {Systematic Biology},
  volume  = {52},
  number  = {2},
  pages   = {131--158},
  year    = {2003},
  doi     = {10.1080/10635150390192780}
}

@inproceedings{kempe_maximizing_2003,
  author    = {Kempe, David and Kleinberg, Jon and Tardos, {\'E}va},
  title     = {Maximizing the Spread of Influence through a Social Network},
  booktitle = {Proceedings of the Ninth ACM SIGKDD International Conference on Knowledge Discovery and Data Mining},
  pages     = {137--146},
  year      = {2003},
  publisher = {ACM},
  doi       = {10.1145/956750.956769}
}

@article{kschischang_factorgraphs_2001,
  author  = {Kschischang, Frank R. and Frey, Brendan J. and Loeliger, Hans-Andrea},
  title   = {Factor Graphs and the Sum-Product Algorithm},
  journal = {IEEE Transactions on Information Theory},
  volume  = {47},
  number  = {2},
  pages   = {498--519},
  year    = {2001},
  doi     = {10.1109/18.910572}
}

@inproceedings{lawson_nashneuraladaptive_2023,
  author    = {Lawson, Dieterich and Li, Michael Y. and Linderman, Scott W.},
  title     = {{NAS-$\chi$}: Neural Adaptive Smoothing via Twisting},
  booktitle = {Advances in Neural Information Processing Systems 36},
  pages     = {8602--8633},
  year      = {2023},
  url       = {https://proceedings.neurips.cc/paper_files/paper/2023/hash/1b3d005a2cb0e71e698e0b13ac657473-Abstract-Conference.html}
}

@inproceedings{lawson_sihosmoothinginference_2022,
  author    = {Lawson, Dieterich and Ravent{\'o}s, Allan and Warrington, Andrew and Linderman, Scott W.},
  title     = {{SI$\chi$O}: Smoothing Inference with Twisted Objectives},
  booktitle = {Advances in Neural Information Processing Systems 35},
  pages     = {38844--38858},
  year      = {2022},
  url       = {https://papers.neurips.cc/paper_files/paper/2022/hash/fddc79681b2df2734c01444f9bc2a17e-Abstract-Conference.html}
}

@inproceedings{maddison_filteringvariationalobjectives_2017,
  author    = {Maddison, Chris J. and Lawson, Dieterich and Tucker, George and Heess, Nicolas and Norouzi, Mohammad and Mnih, Andriy and Doucet, Arnaud and Teh, Yee Whye},
  title     = {Filtering Variational Objectives},
  booktitle = {Advances in Neural Information Processing Systems 30},
  year      = {2017},
  url       = {https://papers.neurips.cc/paper/7235-filtering-variational-objectives}
}

@article{meulenfrankvander_backwardfilteringforward_2025,
  author  = {{van der Meulen}, Frank H. and Sommer, Stefan},
  title   = {Backward Filtering Forward Guiding},
  journal = {Journal of Machine Learning Research},
  volume  = {26},
  number  = {281},
  pages   = {1--51},
  year    = {2025},
  url     = {https://jmlr.org/papers/v26/25-1130.html}
}

@article{mider_continuousdiscretesmoothingdiffusions_2021,
  author  = {Mider, Marcin and Schauer, Moritz and {van der Meulen}, Frank H.},
  title   = {Continuous-Discrete Smoothing of Diffusions},
  journal = {Electronic Journal of Statistics},
  volume  = {15},
  number  = {2},
  pages   = {4294--4342},
  year    = {2021},
  doi     = {10.1214/21-EJS1894}
}

@inproceedings{naesseth_variationalsequentialmonte_2018,
  author    = {Naesseth, Christian A. and Linderman, Scott W. and Ranganath, Rajesh and Blei, David M.},
  title     = {Variational Sequential Monte Carlo},
  booktitle = {Proceedings of the Twenty-First International Conference on Artificial Intelligence and Statistics},
  series    = {Proceedings of Machine Learning Research},
  volume    = {84},
  pages     = {968--977},
  year      = {2018},
  publisher = {PMLR},
  url       = {https://proceedings.mlr.press/v84/naesseth18a.html}
}

@inproceedings{paige2016inference,
  author    = {Paige, Brooks and Wood, Frank},
  title     = {Inference Networks for Sequential Monte Carlo in Graphical Models},
  booktitle = {Proceedings of the 33rd International Conference on Machine Learning},
  series    = {Proceedings of Machine Learning Research},
  volume    = {48},
  pages     = {3040--3049},
  year      = {2016},
  publisher = {PMLR},
  url       = {https://proceedings.mlr.press/v48/paige16.html}
}

@book{pearl1988probabilistic,
  author    = {Pearl, Judea},
  title     = {Probabilistic Reasoning in Intelligent Systems: Networks of Plausible Inference},
  publisher = {Morgan Kaufmann},
  year      = {1988}
}

@article{rauch_maximumlikelihood_1965,
  author  = {Rauch, H. E. and Tung, F. and Striebel, C. T.},
  title   = {Maximum Likelihood Estimates of Linear Dynamic Systems},
  journal = {AIAA Journal},
  volume  = {3},
  number  = {8},
  pages   = {1445--1450},
  year    = {1965},
  doi     = {10.2514/3.3166}
}

@book{sarkka_bayesianfilteringsmoothing_2023,
  author    = {S{\"a}rkk{\"a}, Simo and Svensson, Lennart},
  title     = {Bayesian Filtering and Smoothing},
  edition   = {2},
  publisher = {Cambridge University Press},
  year      = {2023},
  doi       = {10.1017/9781108917407}
}

@article{schauer_guidedproposalssimulating_2017,
  author  = {Schauer, Moritz and {van der Meulen}, Frank H. and {van Zanten}, Harry},
  title   = {Guided Proposals for Simulating Multi-Dimensional Diffusion Bridges},
  journal = {Bernoulli},
  volume  = {23},
  number  = {4A},
  pages   = {2917--2950},
  year    = {2017},
  doi     = {10.3150/16-BEJ833}
}

@misc{sommer_stochasticsshapeskunitaflows_2025,
  author        = {Sommer, Stefan and Yang, Gefan and Baker, Elizabeth Louise},
  title         = {Stochastics of Shapes and Kunita Flows},
  year          = {2025},
  eprint        = {2512.11676},
  archivePrefix = {arXiv},
  primaryClass  = {math.PR},
  url           = {https://arxiv.org/abs/2512.11676}
}

@article{stroustrup_stochastic_2025,
  author  = {Stroustrup, Sofia and Pedersen, Morten Akh{\o}j and {van der Meulen}, Frank and Sommer, Stefan and Nielsen, Rasmus},
  title   = {Stochastic Phylogenetic Models of Shape},
  journal = {bioRxiv},
  year    = {2025},
  doi     = {10.1101/2025.04.03.646616},
  url     = {https://www.biorxiv.org/content/10.1101/2025.04.03.646616v1}
}

@article{whiteley_twistedparticlefilters_2014,
  author  = {Whiteley, Nick and Lee, Anthony},
  title   = {Twisted Particle Filters},
  journal = {The Annals of Statistics},
  volume  = {42},
  number  = {1},
  pages   = {115--141},
  year    = {2014},
  doi     = {10.1214/13-AOS1167}
}

@inproceedings{yang_neuralguideddiffusion_2025,
  author    = {Yang, Gefan and {van der Meulen}, Frank H. and Sommer, Stefan},
  title     = {Neural Guided Diffusion Bridges},
  booktitle = {Proceedings of the 42nd International Conference on Machine Learning},
  series    = {Proceedings of Machine Learning Research},
  volume    = {267},
  pages     = {71210--71230},
  year      = {2025},
  publisher = {PMLR},
  url       = {https://openreview.net/forum?id=4LClOWTAth}
}

@inproceedings{perez2018film,
  title={Film: Visual reasoning with a general conditioning layer},
  author={Perez, Ethan and Strub, Florian and De Vries, Harm and Dumoulin, Vincent and Courville, Aaron},
  booktitle={Proceedings of the AAAI conference on artificial intelligence},
  volume={32},
  year={2018}
}
%%%%%%%%%%%%%%%%%%%%%%%%%%%%%%%%%%%%%%%%%%%%%%%%%%%%%%%%%%%%

\newpage
\appendix

\section{Additional Problem Setup Details}
\label{app:problem_setup_details}

This section gives the more explicit version of the setup underlying Equation~\eqref{eq:prior}.

\subsection{Markov Kernels as Edge Transitions}
We start with a brief recap of Markov kernels, focusing on their action on functions.

Let $(\X, \fB(\X)), (\Y, \fB(\Y))$ be Borel measurable spaces. A Markov kernel $P:\X\times\fB(\Y)\to[0,1]$ is a map such that:
\begin{itemize}
    \item Given $B\in\fB(\Y)$, the map $x\mapsto P(x, B)$ is $\fB(\X)$-measurable for $x\in \X$;
    \item Given $x\in\X$, the map $B\mapsto P(x, B)$ is a probability measure on $(\Y, \fB(\Y))$.
\end{itemize} 
We call $(\X, \fB(\X))$ and $(\Y, \fB(\Y))$ the ``source'' and ``target'' of $P$, and denote $P:(\X, \fB(\X)) \rightharpoonup (\Y, \fB(\Y))$. For a bounded measurable function $h$ on $(\Y, \fB(\Y))$, the pullback of $h$ through $P$ is
\begin{equation}
    (P h)(x)
    :=
    \int_{\Y} h(y)P(x,\di y).
    \label{eq:kernel_pullback}
\end{equation}

For an edge in the tree, we use an edge-wise Markov kernel $P_v: (\X_{\pa{v}}, \fB(\X_{\pa{v}})) \rightharpoonup (\X_v, \fB(\X_v))$ to describe the conditional law of $X_v$ given $X_{\pa{v}}=x$:
\[
    P_v(x,\di y)
    \equiv
    \P(X_v\in\di y\mid X_{\pa{v}}=x).
\]
When a density $p_v$ of $P_v$ exists with respect to a reference measure $\lambda_v$, we write
\[
    P_v(x,\di y) = p_v(y\mid x)\lambda_v(\di y) .
\]
For a leaf $l\in\L$, the edge kernel is still the generative conditional law of the child given its parent. If the leaf-edge kernel admits a density $p_l$ with respect to a reference measure $\nu_l$, then
\[
    P_l(x,\di y)
    =
    p_l(y\mid x)\nu_l(\di y),
\]
and the smoothing problem conditions on the realized child value $X_l=x_l$. This contributes the factor $p_l(x_l\mid x_{\pa{l}})$ to the posterior over the unobserved internal vertices.

\subsection{Diffusion Paths as Continuous Transitions}

The same Markov-kernel notation applies to continuous-time edges, but there are two useful levels of description. At the tree level, an edge still defines a vertex-to-vertex kernel $P_v(x,\di y)$ from the parent state to the child state. At the path level, this kernel is induced by a conditional diffusion path law. In the continuous case, the vertex-level kernels can be naturally induced by the path-level kernels.

For a continuous edge, conditional on $X_{\pa{v}}=x$, let $\mathbb{P}_v^x$ denote the law on the path space $\mathcal{C}_v:=\mathcal{C}([0,T_v],\R^d)$ of the solution $Z_v$ to the SDE
\begin{equation}
    \di Z_v(t)
    =
    b_v(Z_v(t))\di t
    +
    \sigma_v(Z_v(t))\di W(t),
    \qquad
    Z_v(0)=x.
    \label{eq:app_continuous_prior}
\end{equation}
We assume $b_v$ and $\sigma_v$ satisfy standard regularity conditions ensuring existence and uniqueness of $Z_v$. Define the endpoint evaluation map $e_{T_v}:\mathcal{C}_v\to\R^d$ by $e_{T_v}(z)=z(T_v)$. With this map, we can explicitly define the vertex-level Markov kernel $P_v$ as the pushforward of the path law $\P^x_v$:
\begin{equation}
    P_v(x,A)
    :=
    \mathbb{P}_v^x(e_{T_v}^{-1}(A))
    =
    \mathbb{P}_v^x\{z\in\mathcal{C}_v:z(T_v)\in A\},
    \qquad A\in\fB(\R^d).
    \label{eq:endpoint_kernel_pushforward}
\end{equation}
Equivalently, with $X_v=Z_v(T_v)$,
\[
    P_v(x,A)
    =
    \mathbb{P}_v^x(Z_v(T_v)\in A).
\]
Thus, we can fit continuous edges within the same vertex-level Markov-kernel framework as discrete edges, while retaining an underlying path-space law when pathwise quantities are needed.

The pullback of an endpoint-valued function $h:\R^d\to\R$ through the continuous edge is therefore
\[
    (P_vh)(x)
    :=
    \mathbb{E}_{\mathbb{P}_v^x}\!\left[h(Z_v(T_v))\right].
\]
It is also useful to pull endpoint functions back to intermediate times. For $0\leq t\leq T_v$, define the time-inhomogeneous backward semigroup
\begin{equation}
    (P_{v,t\to T_v}h)(z)
    :=
    \mathbb{E}\!\left[h(Z_v(T_v))\mid Z_v(t)=z\right].
    \label{eq:continuous_backward_semigroup}
\end{equation}
At $t=0$, this recovers the vertex-level pullback, $(P_{v,0\to T_v}h)(x)=(P_vh)(x)$. This is the continuous-time analogue of the discrete pullback in Equation~\eqref{eq:kernel_pullback}.

\section{Additional Method Details}
\label{app:method_details}

\subsection{Edge-Wise Neural Corrections}
\label{app:edgewise_corrections}

For a discrete Gaussian edge, we start with constructing the guided Gaussian base from the true local prior and the proxy message. If $P_s(x,\di y)=\mathcal{N}(\di y;\mu_s(x),\Sigma_s(x))$ and $\log\tilde{h}_s(y)=\mathrm{const}+\tilde{\eta}_s^\top y-\frac{1}{2}y^\top\tilde{H}_sy$ from Proposition~\ref{prop:gaussian_guiding_messages}, by the definition of guided proposal Equation~\eqref{eq:guided_transition}, the guided proposal kernel is again Gaussian: $\mathcal{N}(\di y; \mu^{\circ}_s(x), \Sigma^{\circ}_s(x))$, with the parameters
\begin{equation}
    \begin{aligned}
        \Sigma_s^\circ(x)&=H_s^\circ(x)^{-1}, & \mu_s^\circ(x)&=\Sigma_s^\circ(x)\eta_s^\circ(x), \\
        H_s^\circ(x)&=\Sigma_s(x)^{-1}+\tilde{H}_s, & \eta_s^\circ(x)&=\Sigma_s(x)^{-1}\mu_s(x)+\tilde{\eta}_s.
    \end{aligned}
    \label{eq:app_discrete_base_info}
\end{equation}
The neural correction is applied to this base, not to the prior from scratch. Let $\psi_s$ denote the path context from Appendix~\ref{app:amortized_correction_details}. The Gaussian-mixture network takes
\begin{equation}
    \zeta_s(x)=\left(x,\psi_s,\mu_s^\circ(x),\operatorname{vec}\Sigma_s^\circ(x)\right)
\end{equation}
and outputs logits $a_{s,k}^\theta(x) = a_{s,k}^\theta(\zeta_s(x))$, residual means $\Delta\mu_{s,k}^\theta(x) = \Delta\mu_{s,k}^\theta(\zeta_s(x))$, and lower-triangular factors $M_{s,k}^\theta(x) = M_{s,k}^\theta(\zeta_s(x))$ with positive diagonal. If $L_s^\circ(x)$ is the Cholesky factor of $\Sigma_s^\circ(x)$, the variational transition is
\begin{equation}
    \begin{aligned}
        q_s^\theta(y\mid x) &=\sum_{k=1}^{K}\operatorname{softmax}(a_s^\theta(x))_k\,\mathcal{N}\!\left(y;\mu^{\theta}_{s,k}(x),\Sigma^{\theta}_{s,k}(x)\right). \\
        \mu^{\theta}_{s,k}(x) &= \mu_s^\circ(x)+\Delta\mu_{s,k}^\theta(x),\qquad \Sigma^{\theta}_{s,k}(x) = L_s^\circ(x)M_{s,k}^\theta(x)M_{s,k}^\theta(x)^\top L_s^\circ(x)^\top
    \end{aligned}
    \label{eq:app_discrete_neural_mixture}
\end{equation}
The lower-triangular factor is multiplicative: when guidance is present, the component Cholesky factor is $L_s^\circ(x)M_{s,k}^\theta(x)$. The diagonal of $M_{s,k}^\theta$ is parameterized by a softplus transform initialized at one, while the off-diagonal entries are initialized at zero and scaled by a small factor. Thus the zero-initialized final layer gives $a_{s,k}^\theta=0$, $\Delta\mu_{s,k}^\theta=0$, and $M_{s,k}^\theta\approx I$, so the initial proposal is the guided Gaussian base. In the no-guidance ablation, the same network is initialized around the true local prior rather than around a guided proposal.

For continuous transitions, the affine proxy score is kept outside the network. Let $v_s=\tilde{H}_{s,T_s}^{-1}\tilde{\eta}_{s,T_s}$ be the proxy endpoint summary and let $\gamma_{\theta}$ denote Fourier embeddings. The residual control used in Equation~\eqref{eq:neural_control} is
\begin{equation}
    r_s^\theta(t,z)=R_\theta\!\left(z,\gamma_\theta(t/T_s),\gamma_\theta(T_s),v_s,\psi_s\right),
    \qquad
    g_s^\theta(t,z)=\nabla_z\log\tilde{h}_{s,t}(z)+r_s^\theta(t,z).
    \label{eq:app_continuous_residual_control}
\end{equation}
The score network first embeds $(z,\gamma_\theta(t/T_s))$ through a linear layer. Each hidden layer is then modulated by a FiLM map \citep{perez2018film} whose conditioning vector is $(v_s,\gamma_\theta(T_s),\psi_s)$: if $c_s(t)$ denotes this conditioning vector, the layer applies $h\mapsto (1+\Gamma_\theta(c_s(t)))\odot h+\Phi_\theta(c_s(t))$ before the nonlinearity, where $\Gamma_\theta, \Phi_\theta$ are independent scale and shift networks. The FiLM projections and final output layer are initialized at zero, so the residual control starts at zero and the initial controlled SDE is exactly the proxy-guided SDE. In the no-guidance ablation, both $\nabla_z\log\tilde{h}_{s,t}(z)$ is set to zero, so the network must learn the full control without backward-filtering messages. We implementation all the numerical experiments in JAX \footnote{https://github.com/jax-ml/jax} and neural networks with Flax \footnote{https://github.com/google/flax}. A full code implementation to reproduce all the experiential results presented in the paper is available at \footnote{https://github.com/gefanyang/nbffg}.

\subsection{Path-Subsampled Training}
\label{app:path_subsampling_details}

\begin{proposition}[Union-of-paths subsampling]
\label{prop:root_to_leaf_subsampling}
Let $\Gamma$ be a distribution over nonempty subsets of $\L$. Draw $B\sim\Gamma$ and write $A(B)=\{s\in\S:\exists l\in B\text{ such that }s\preceq l\}$ and $A_0(B)=A(B)\cup\{\0\}$, where $s\preceq l$ means that $s$ lies on the unique path from the root $\0$ to the leaf $l$. Define $\pi_s=\mathbb{P}_{B\sim\Gamma}\{s\in A(B)\}$ and $\rho_l=\mathbb{P}_{B\sim\Gamma}\{\pa{l}\in A_0(B)\}$, and assume $\pi_s>0$ for all $s\in\S$ and $\rho_l>0$ for all $l\in\L$. If only the variational transitions on $A(B)$ are simulated, then the Rao--Blackwellized path loss
\[
    \widehat{\J}_{B}(\theta)
    =
    \sum_{s\in A(B)}
    \frac{C_s^\theta}{\pi_s}
    +
    \sum_{\substack{l\in\L\\\pa{l}\in A_0(B)}}
    \frac{C_l}{\rho_l}
\]
satisfies
\[
    \mathbb{E}_{B\sim\Gamma}\mathbb{E}_{\Q^\theta_{A(B)}}\!\left[\widehat{\J}_{B}(\theta)\right]=\J(\theta),
\]
where $\Q^\theta_{A(B)}$ is the marginal variational law on the sampled path union.
\end{proposition}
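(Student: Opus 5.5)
The proof is a Horvitz--Thompson argument combined with the Markov structure of $\Q^\theta$. First I rewrite $\J(\theta)$ from Proposition~\ref{prop:tree_elbo} as a sum of local terms. For continuous edges, Equation~\eqref{eq:continuous_edge_kl} replaces the log-density ratio by its conditional expectation given $X_{\pa{s}}$, which is the control energy. This gives
\[
    \J(\theta)=\sum_{s\in\S}\mathbb{E}_{\Q^\theta}[C_s^\theta]+\sum_{l\in\L}\mathbb{E}_{\Q^\theta}[C_l].
\]
The key observation is that $C_s^\theta$ depends only on the variables $X_u$ with $u$ on the root-to-$s$ path, or on the path segment of edge $s$ and its start point. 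Likewise, $C_l$ depends only on the ancestors of $\pa{l}$.

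Next I check that the sampled set is closed under taking ancestors. If $s\in A(B)$, then every internal ancestor of $s$ lies on the same root-to-leaf path, so it also lies in $A(B)$. Similarly, if $\pa{l}\in A_0(B)$, then all ancestors of $\pa{l}$ lie in $A_0(B)$. Since $\Q^\theta$ in Equation~\eqref{eq:variational_tree_law} is a product of kernels directed away from the root, marginalizing it onto an ancestor-closed set gives the product of kernels restricted to that set. Integrating out the unsampled vertices leaf-upward makes each of their kernels integrate to one. Hence $\Q^\theta_{A(B)}$ is exactly the law obtained by simulating only the transitions on $A(B)$. Moreover, for any $s\in A(B)$, the law of the variables entering $C_s^\theta$ under $\Q^\theta_{A(B)}$ equals their law under the full $\Q^\theta$, and the same holds for $C_l$ when $\pa{l}\in A_0(B)$.

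Then I write $\widehat{\J}_B$ with indicator functions:
\[
    \widehat{\J}_B=\sum_{s\in\S}\mathbf 1\{s\in A(B)\}\frac{C_s^\theta}{\pi_s}+\sum_{l\in\L}\mathbf 1\{\pa{l}\in A_0(B)\}\frac{C_l}{\rho_l}.
\]
I condition on $B$ and take the inner expectation under $\Q^\theta_{A(B)}$. Because $B$ is drawn independently of the simulation, the marginal-invariance step gives $\mathbb{E}_{\Q^\theta}[C_s^\theta]$ on each included term. The outer expectation over $B$ then multiplies each term by $\pi_s$ or $\rho_l$, and these cancel the denominators; the positivity assumptions $\pi_s>0$ and $\rho_l>0$ guarantee the weights are well defined. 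Linearity, which holds whenever each $\mathbb{E}|C|<\infty$, which I assume as it is implicit in $\J$ being finite, then yields $\J(\theta)$.

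The main obstacle is the marginal-consistency step, particularly for continuous edges. There, $C_s^\theta$ is a pathwise integral, and $\Q^\theta_{A(B)}$ must be read as a law on the vertex states together with the edge paths in $A(B)$. I would handle this by treating each edge path $Z_s$ as an augmented variable in the tree factorization. The same leaf-upward integration argument then applies to the path-level kernels $Q_s^\theta$, because their endpoint pushforwards are exactly the vertex-level kernels.
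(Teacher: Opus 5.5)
Your proposal is correct and follows essentially the same route as the paper: marginal invariance of $\Q^\theta$ on the sampled path union, followed by Horvitz--Thompson cancellation of $\pi_s$ and $\rho_l$, with the internal-edge and leaf terms handled separately. Beyond that, you supply details the paper's proof leaves implicit: the ancestor-closure of $A(B)$ that justifies marginalization by leaf-upward integration, and reading $\J$ at the path level for continuous edges, so that $\mathbb{E}[C_s^\theta]$ equals the path-space KL rather than the generally smaller endpoint KL.
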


\begin{proof}
For any internal vertex $s\in\S$, the factorization of $\Q^\theta$ implies that the marginal law of the variables on the path from $\0$ to $s$ is the same whether we sample the full tree or only a sampled path union containing $s$. Hence, whenever $s\in A(B)$,
\[
    \mathbb{E}_{\Q^\theta_{A(B)}}[C_s^\theta]
    =
    \mathbb{E}_{\Q^\theta}[C_s^\theta].
\]
Therefore the expected internal-edge part of the path loss is
\begin{align*}
    &\mathbb{E}_{B\sim\Gamma}
    \mathbb{E}_{\Q^\theta_{A(B)}}
    \left[
        \sum_{s\in A(B)}
        \frac{C_s^\theta}{\pi_s}
    \right]
    \\
    &\quad =
    \sum_{s\in\S}
    \frac{1}{\pi_s}
    \mathbb{P}_{B\sim\Gamma}\{s\in A(B)\}
    \mathbb{E}_{\Q^\theta}[C_s^\theta]
    =
    \sum_{s\in\S}
    \mathbb{E}_{\Q^\theta}[C_s^\theta].
 \end{align*}
For a leaf $l\in\L$, the cost $C_l$ depends only on $X_{\pa{l}}$ and the fixed observed value $x_l$. Whenever $\pa{l}\in A_0(B)$, the marginal law of $X_{\pa{l}}$ under $\Q^\theta_{A(B)}$ agrees with its full-tree marginal; if $\pa{l}=\0$, this state is fixed. Thus the Rao--Blackwellized leaf term satisfies
\begin{align*}
    &\mathbb{E}_{B\sim\Gamma}
    \mathbb{E}_{\Q^\theta_{A(B)}}
    \left[
        \sum_{\substack{l\in\L\\\pa{l}\in A_0(B)}}
        \frac{C_l}{\rho_l}
    \right]
    \\
    &\quad =
    \sum_{l\in\L}
    \frac{1}{\rho_l}
    \mathbb{P}_{B\sim\Gamma}\{\pa{l}\in A_0(B)\}
    \mathbb{E}_{\Q^\theta}[C_l]
    =
    \sum_{l\in\L}
    \mathbb{E}_{\Q^\theta}[C_l].
\end{align*}
Adding the two identities gives the stated unbiasedness.
\end{proof}

Here $C_s^\theta$ is the internal-edge ELBO cost: the log variational-to-prior density ratio in discrete time and the control-energy contribution in continuous time; $C_l=-\log p_l(x_l\mid X_{\pa{l}})$. The path-subsampled objective in Proposition~\ref{prop:root_to_leaf_subsampling} uses a Horvitz--Thompson correction over the sampled path union. A plain implementation samples $B\subseteq\L$, simulates the internal vertices in $A(B)$, and weights both sampled internal costs and sampled leaf costs by their inclusion probabilities. The version stated in Equation~\eqref{eq:path_subsampled_elbo} additionally uses an optional Rao--Blackwellized leaf term: once $X_{\pa{l}}$ has been sampled, the likelihood costs of all observed children of that parent are available and can be averaged into the estimator, rather than using only the selected leaves.

For uniform sampling without replacement with $|B|=m$ and $N=|\L|$, if $\L_s$ denotes the set of leaves below an internal vertex $s$, then $\pi_s=1-\binom{N-|\L_s|}{m}/\binom{N}{m}$. For a leaf $l$, the Rao--Blackwellized parent-inclusion probability is $\rho_l=\pi_{\pa{l}}$ when $\pa{l}\in\S$ and $\rho_l=1$ when $\pa{l}=\0$. The single-path estimator is recovered at $m=1$; larger leaf mini-batches reduce variance because shared ancestors are simulated once and reused by all sampled descendants.

\subsection{Amortized Correction and Path Encoding}
\label{app:amortized_correction_details}

For each vertex $s$, let $u_{s,j}$ be the structural feature vector at the $j$th step of the padded root-to-$s$ path, and let $a_{s,j}\in\{0,1\}$ be its mask. In our implementation, $u_{s,j}$ contains normalized sibling rank, normalized branching factor, subtree leaf fraction, normalized depth, and optionally log edge length in the continuous formulation. The path embedding is computed by a masked GRU,
\begin{equation}
    \bar{h}_{s,j}=\operatorname{GRU}_{\theta}(\ell_{\theta}(u_{s,j}),h_{s,j-1}), \qquad h_{s,j}=a_{s,j}\bar{h}_{s,j}+(1-a_{s,j})h_{s,j-1}, \qquad \psi_s=h_{s,D},
    \label{eq:app_path_encoder}
\end{equation}
where $\ell_{\theta}$ is a learned linear projection and $D$ is the maximum padded depth. The shared correction networks use $\psi_s$ as the structural edge context, so every sampled path updates parameters reused by unsampled branches.

\section{Additional Experimental Details}
\label{app:experimental_details}

\subsection{Benchmark 1A: Discrete Gaussian Tree}
\label{app:exp1a_details}

The true hidden transition on each non-leaf edge is \(X_v=A_vX_{\pa{v}}+b_v+\epsilon_v\), where \(\epsilon_v\sim\mathcal{N}(0,Q)\), and each observed leaf is generated from \(X_l\mid X_{\pa{l}}\sim\mathcal{N}(X_{\pa{l}},r_{\mathrm{obs}}^2I)\) with \(r_{\mathrm{obs}}=0.05\). The root is fixed at \(x_0=0\). All runs use the same tree topology and transition parameters; each run changes only the single forward sample used to generate the observed leaves and the downstream training randomness.

The latent tree is an irregular binary backbone generated by repeated splits. Starting from the root, we split uniformly chosen active terminal latent vertices until \(K=7\) splits have been made, rejecting and resampling if the latent-backbone depth before attaching observations exceeds \(D_{\max}=5\). This gives \(N_{\mathrm{lat}}=1+2K=15\) latent vertices and \(8\) terminal latent vertices. We then attach one observed leaf to each terminal latent vertex, giving an augmented tree with \(23\) total vertices. Figure~\ref{fig:exp1_trees} shows this augmented tree, so the displayed topology includes the additional observation leaves and can be one level deeper than the latent backbone controlled by \(D_{\max}\).

For \(d=4\), let \(U_d\) be a fixed random orthogonal matrix and define \(A_0=U_d\operatorname{diag}(\lambda_1,\ldots,\lambda_d)U_d^\top\), where \(\lambda_i=0.35+0.50(i-1)/(d-1)\). Each hidden edge uses \(A_v=\rho_vA_0\) with \(\rho_v\sim\operatorname{Unif}(0.85,1.05)\), and \(b_v=0.15g_v/\sqrt{d}\) with \(g_v\sim\mathcal{N}(0,I_d)\). The shared process covariance is \(Q=U_d\operatorname{diag}(q_1^2,\ldots,q_d^2)U_d^\top\), where \(q_i=0.05+0.07(i-1)/(d-1)\).

All guided proxy models keep the true covariance and true leaf observation model. We evaluate \((\tilde{A}_v,\tilde{b}_v)\in\{(A_v,b_v),(I,0),(-A_v,b_v),(A_v,-b_v),(-A_v,-b_v)\}\). For each proxy, ``uncorrected'' denotes the guided proposal built from the proxy messages, and ``corrected'' denotes NBFFG using the same proxy guide after neural correction. We additionally evaluate a no-guidance ablation, where the neural correction is trained from the prior without proxy messages.

The exact log evidence and exact posterior marginals are computed by Gaussian tree smoothing, or more specifically, Rauch-Tung-Striebel (RTS) smoother. Let \(\mathcal{J}^\star=-\log p(x_\L)\), let \(p_s^\star=\mathcal{N}(m_s^\star,C_s^\star)\) be the exact posterior marginal at internal latent vertex \(s\in\S\), and let \(q_s=\mathcal{N}(\hat{m}_s,\hat{C}_s)\) be the empirical Gaussian marginal fitted from posterior samples. We report
\begin{equation}
\begin{aligned}
\Delta_{\mathrm{rel}} &= \frac{\mathcal{J}-\mathcal{J}^\star}{|\mathcal{J}^\star|}, &
\mathrm{KL}_{\mathrm{avg}} &= \frac{1}{|\S|}\sum_{s\in\S}\mathrm{KL}(q_s\Vert p_s^\star), \\
E_{\mathrm{mean}} &= \frac{1}{|\S|}\sum_{s\in\S}\|\hat{m}_s-m_s^\star\|_2, &
E_{\mathrm{cov}} &= \frac{1}{|\S|}\sum_{s\in\S}\frac{\|\hat{C}_s-C_s^\star\|_F}{\|C_s^\star\|_F},
\end{aligned}
\label{eq:exp1_metrics}
\end{equation}
where $\|\cdot\|_F$ is the Frobenius norm. Gains are computed as \(1-M_{\mathrm{corrected}}/M_{\mathrm{uncorrected}}\) for the considered metric $M$.

The corrected model uses a 3-layer neural correction network equipped with SiLU activations, and with embedding dimension \(8\) for $\psi_s$, trained for \(10{,}000\) iterations with \(32\) particles per iteration, peak learning rate \(10^{-3}\), \(500\) warmup steps, cosine decay to \(10\%\) of the peak learning rate, and global gradient clipping at \(1.0\). Evaluation uses \(16\) batches of \(128\) particles for NELBO estimation and \(128\) posterior samples for marginal diagnostics.

For the optimal proxy in the discrete benchmark, the uncorrected guided proposal coincides with the exact Gaussian smoothing law. The relative NELBO gap of the uncorrected guide is therefore reported as zero. The remaining nonzero optimal-proxy diagnostics in Table~\ref{tab:exp1a_proxy_correction}, including the corrected entries, come from Monte Carlo estimation and training/evaluation randomness under the same experimental pipeline. We report them as reference values rather than as evidence of proxy misspecification.

\begin{table}[t]
\centering
\caption{Benchmark 1A: discrete heterogeneous linear Gaussian tree under proxy misspecification. Values are mean \(\pm\) std. over five runs. The proxy column gives \((\tilde A,\tilde b)\), with ``No guidance'' denoting an unguided prior-based ablation. ``Uncorr.'' denotes the proxy guide before neural correction, ``Corr.'' denotes NBFFG after neural correction, and gain is \(1-\mathrm{Corr.}/\mathrm{Uncorr.}\). For the optimal proxy, corrected diagnostics are reported as Monte Carlo reference errors, while gain is not interpreted.}
\label{tab:exp1a_proxy_correction}
\setlength{\tabcolsep}{2.4pt}
\scriptsize
\resizebox{\textwidth}{!}{%
\begin{tabular}{@{}lrrrrrrrrrrrr@{}}
\toprule
& \multicolumn{3}{c}{Relative NELBO gap \(\downarrow\)}
& \multicolumn{3}{c}{Average marginal KL \(\downarrow\)}
& \multicolumn{3}{c}{Marginal mean error \(\downarrow\)}
& \multicolumn{3}{c}{Marginal covariance error \(\downarrow\)} \\
\cmidrule(lr){2-4}\cmidrule(lr){5-7}\cmidrule(lr){8-10}\cmidrule(l){11-13}
Proxy
& Uncorr. & Corr. & Gain \(\uparrow\)
& Uncorr. & Corr. & Gain \(\uparrow\)
& Uncorr. & Corr. & Gain \(\uparrow\)
& Uncorr. & Corr. & Gain \(\uparrow\) \\
\midrule
No guidance
& \(1.70{\times}10^4\pm7.34{\times}10^3\) & \(96.659\pm15.237\) & \(99.4\%\)
& \(9.670\pm2.548\) & \(75.495\pm24.010\) & \(-680.7\%\)
& \(0.133\pm0.041\) & \(0.635\pm0.138\) & \(-378.1\%\)
& \(5.475\pm0.165\) & \(0.820\pm0.077\) & \(85.0\%\) \\
\((A,b)\)
& \(0.000\pm0.000\) & \(0.002\pm0.001\) & n/a
& \(0.056\pm0.004\) & \(0.060\pm0.010\) & n/a
& \(0.009\pm0.001\) & \(0.009\pm0.001\) & n/a
& \(0.185\pm0.013\) & \(0.198\pm0.011\) & n/a \\
\((I,0)\)
& \(0.913\pm0.230\) & \(0.035\pm0.011\) & \(96.2\%\)
& \(1.861\pm0.096\) & \(0.118\pm0.004\) & \(93.7\%\)
& \(0.092\pm0.003\) & \(0.015\pm0.001\) & \(83.2\%\)
& \(0.273\pm0.010\) & \(0.234\pm0.007\) & \(14.1\%\) \\
\((-A,b)\)
& \(1.227\pm0.322\) & \(0.016\pm0.005\) & \(98.7\%\)
& \(1.677\pm0.602\) & \(0.080\pm0.017\) & \(95.2\%\)
& \(0.104\pm0.019\) & \(0.013\pm0.002\) & \(87.3\%\)
& \(0.180\pm0.022\) & \(0.197\pm0.013\) & \(-9.3\%\) \\
\((A,-b)\)
& \(3.352\pm0.800\) & \(0.046\pm0.015\) & \(98.6\%\)
& \(6.559\pm0.051\) & \(0.096\pm0.022\) & \(98.5\%\)
& \(0.181\pm0.001\) & \(0.016\pm0.002\) & \(91.2\%\)
& \(0.185\pm0.013\) & \(0.203\pm0.012\) & \(-9.9\%\) \\
\((-A,-b)\)
& \(4.482\pm0.743\) & \(0.078\pm0.040\) & \(98.2\%\)
& \(6.690\pm1.073\) & \(0.111\pm0.020\) & \(98.3\%\)
& \(0.192\pm0.025\) & \(0.016\pm0.002\) & \(91.5\%\)
& \(0.180\pm0.022\) & \(0.216\pm0.016\) & \(-20.1\%\) \\
\bottomrule
\end{tabular}%
}
\end{table}

\subsection{Benchmark 1B: Continuous OU Tree}
\label{app:exp1b_details}

The true hidden dynamics on each non-leaf edge are \(dZ_v(t)=B_v(\theta_v-Z_v(t))dt+\sigma dW_t\), with \(Z_v(0)=X_{\pa{v}}\) and \(X_v=Z_v(T_v)\). Each observed leaf is generated from \(X_l\mid X_{\pa{l}}\sim\mathcal{N}(X_{\pa{l}},r_{\mathrm{obs}}^2I)\) with \(r_{\mathrm{obs}}=0.05\), and the root is fixed at \(x_0=0\). As in Benchmark~1A, all runs share one topology and one prior model, and each run produces one observation instance.

The latent tree is generated by the same random-split procedure as Benchmark~1A, using \(K=7\) splits and latent-backbone depth limit \(D_{\max}=5\) before observations are attached. This gives \(15\) latent vertices and \(8\) observed leaves after augmentation. The state dimension is \(d=2\). Edge lengths are sampled as \(T_v=0.4+0.6u_v\), with \(u_v\sim\operatorname{Unif}(0,1)\). Figure~\ref{fig:exp1_trees} again shows the augmented tree including the observed leaves.

\begin{figure}[ht]
    \centering
    \begin{minipage}{0.47\textwidth}
        \centering
        \includegraphics[width=\linewidth]{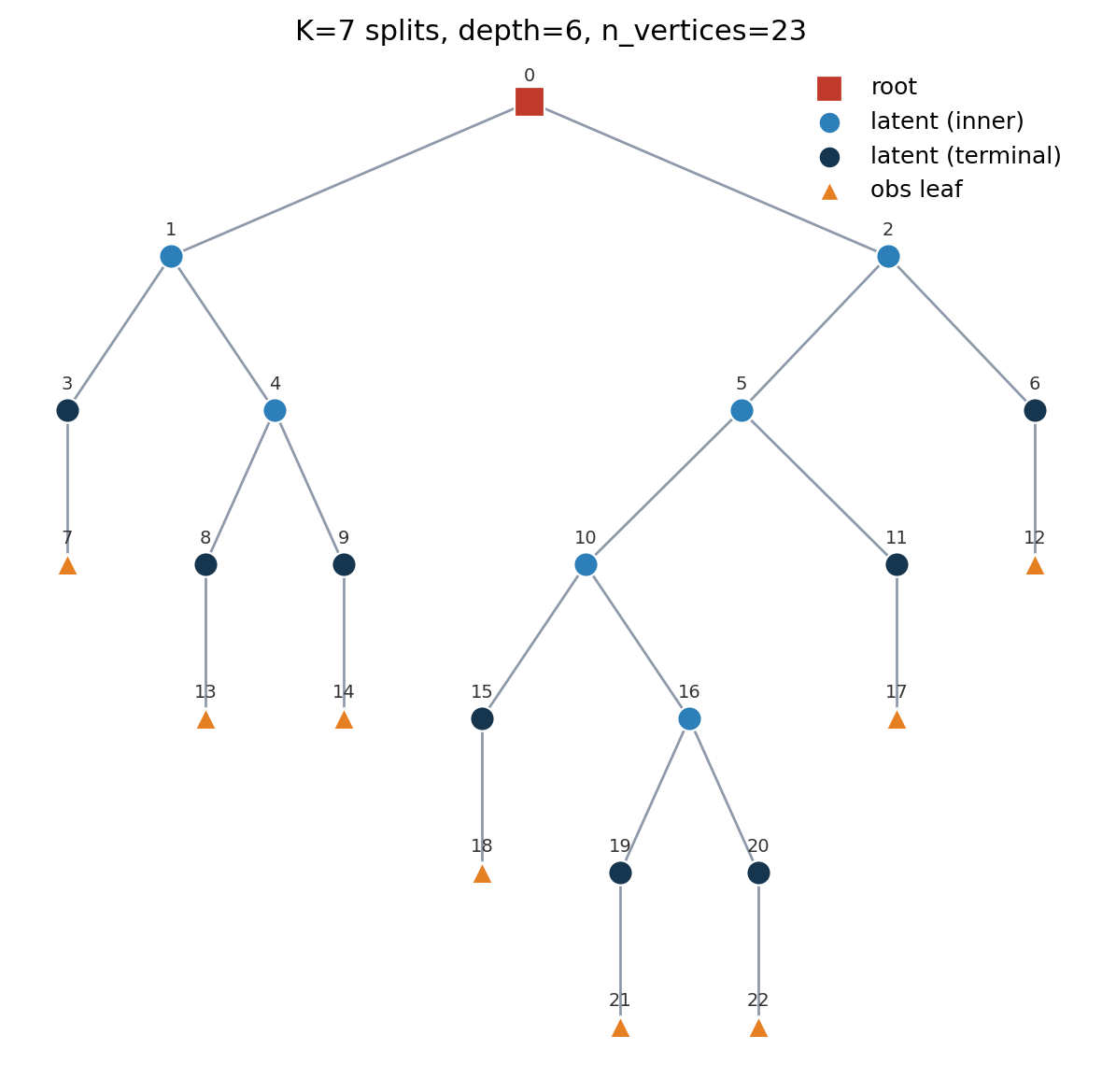}
        \textbf{(a) Benchmark 1A}
    \end{minipage}
    \hfill
    \begin{minipage}{0.47\textwidth}
        \centering
        \includegraphics[width=\linewidth]{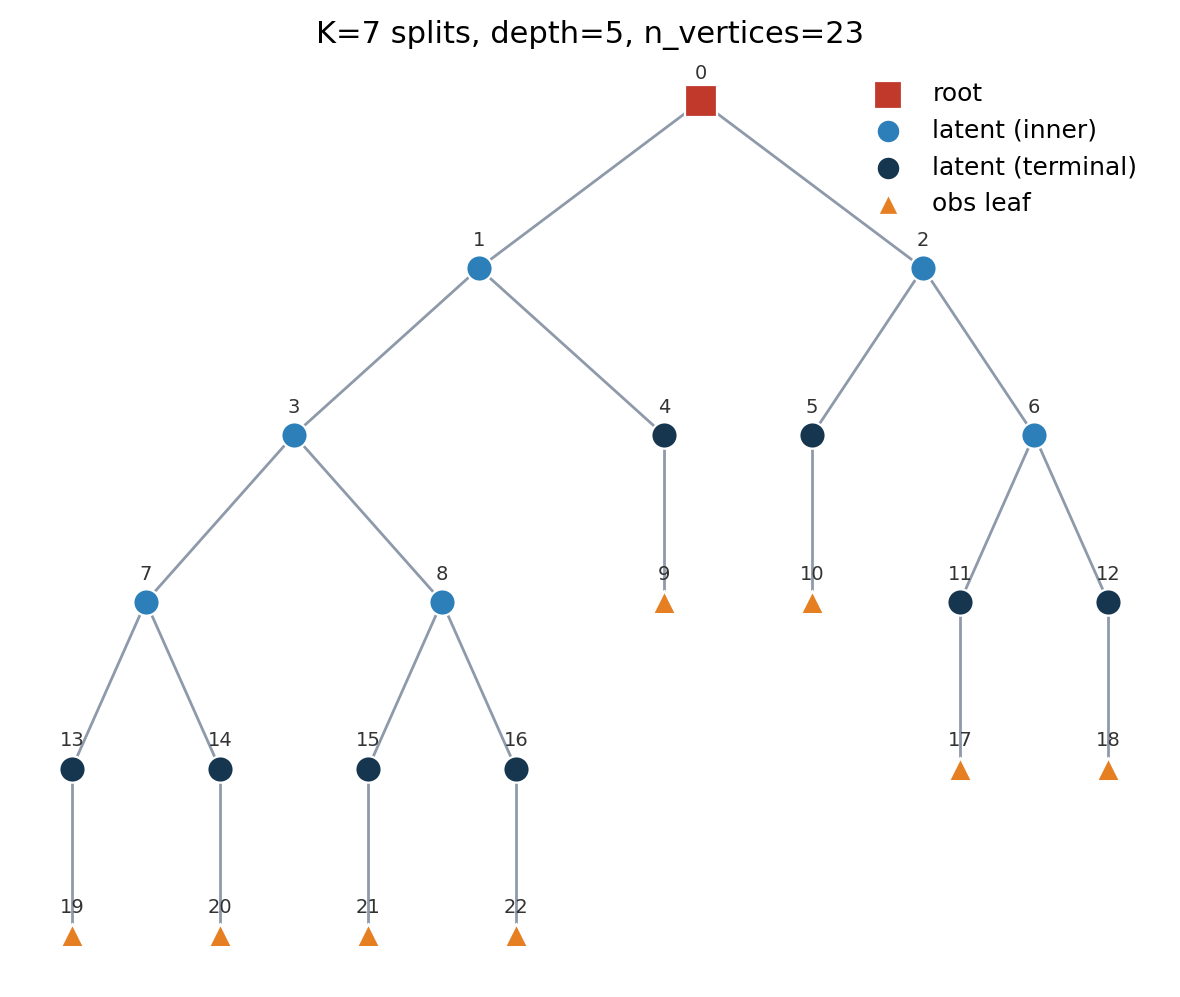}
        \textbf{(b) Benchmark 1B}
    \end{minipage}
    \caption{Tree topologies used in the exact-posterior benchmarks. Blue vertices are latent states and orange leaves are observed children attached to terminal latent vertices.}
    \label{fig:exp1_trees}
\end{figure}

Let \(U=2^{-1/2}\begin{psmallmatrix}1&-1\\1&1\end{psmallmatrix}\) and \(B_0=U\operatorname{diag}(0.6,1.2)U^\top\). Each hidden edge uses \(B_v=\rho_vB_0\), with \(\rho_v\sim\operatorname{Unif}(0.85,1.15)\), and \(\theta_v=0.50g_v/\sqrt{d}\), with \(g_v\sim\mathcal{N}(0,I_d)\). The shared diffusion covariance is \(a=\sigma\sigma^\top=U\operatorname{diag}(0.15^2,0.30^2)U^\top\), and the same \(\sigma\) is used by all proxy models and by the variational SDE.

All guided proxy models keep the true diffusion and true leaf observation model. We evaluate the optimal proxy \((\tilde{B}_v,\tilde{\theta}_v)=(B_v,\theta_v)\), the canonical Brownain proxy \(d\tilde{Z}_v(t)=\sigma d\tilde{W}_t\), and \((\tilde{B}_v,\tilde{\theta}_v)\in\{(DB_vD,\theta_v),(B_v,-\theta_v),(DB_vD,-\theta_v)\}\), where \(D=\operatorname{diag}(1,-1)\). We use \(DB_vD\) rather than \(-B_v\) so that the proxy remains mean-reverting while reversing the cross-coordinate coupling. We also include a no-guidance ablation, where the neural correction is trained from the prior SDE without any guiding message.

For the exact reference, each OU edge is converted to its closed-form endpoint transition \(X_v\mid X_{\pa{v}}\sim\mathcal{N}(M_vX_{\pa{v}}+c_v,Q_v^{\mathrm{OU}})\), where \(M_v=\exp(-B_vT_v)\), \(c_v=(I-M_v)\theta_v\), and \(Q_v^{\mathrm{OU}}=\int_0^{T_v}\exp(-B_v\tau)a\exp(-B_v^\top \tau)d\tau\). Gaussian tree smoothing on these endpoint transitions gives the exact log evidence and posterior marginals. We use the metrics in Equation~\eqref{eq:exp1_metrics} and the same gain definition as in Appendix~\ref{app:exp1a_details}.

The variational SDE is simulated with \(50\) Euler--Maruyama steps per edge. The corrected model uses a 3-layer score network equipped with SiLU activations, and with embedding dimension \(8\) of $\psi_s$, trained for \(10{,}000\) iterations with \(16\) particles per iteration, peak learning rate \(10^{-3}\), \(500\) warmup steps, cosine decay to \(5\%\) of the peak learning rate, and global gradient clipping at \(1.0\). Evaluation uses \(16\) batches of \(128\) particles for NELBO estimation and \(128\) endpoint samples for marginal diagnostics.

Figure~\ref{fig:exp1_loss_curves} reports the training loss curves. The legends use implementation identifiers rather than the proxy notation used in the tables: in Benchmark~1A, \texttt{canonical} denotes \((I,0)\), \texttt{sign\_flip\_A} denotes \((-A,b)\), \texttt{sign\_flip\_b} denotes \((A,-b)\), and \texttt{sign\_flip\_Ab} denotes \((-A,-b)\); in Benchmark~1B, \texttt{canonical\_brownian} denotes the Brownian proxy, \texttt{sign\_flip\_coupling} denotes \((DBD,\theta)\), \texttt{sign\_flip\_target} denotes \((B,-\theta)\), and \texttt{sign\_flip\_coupling\_target} denotes \((DBD,-\theta)\). In both panels, \texttt{optimal} denotes the matching-transition proxy and \texttt{no\_guidance} denotes the prior-based ablation without proxy messages.

\begin{figure}[ht]
    \centering
    \begin{minipage}{0.48\textwidth}
        \centering
        \includegraphics[width=\linewidth]{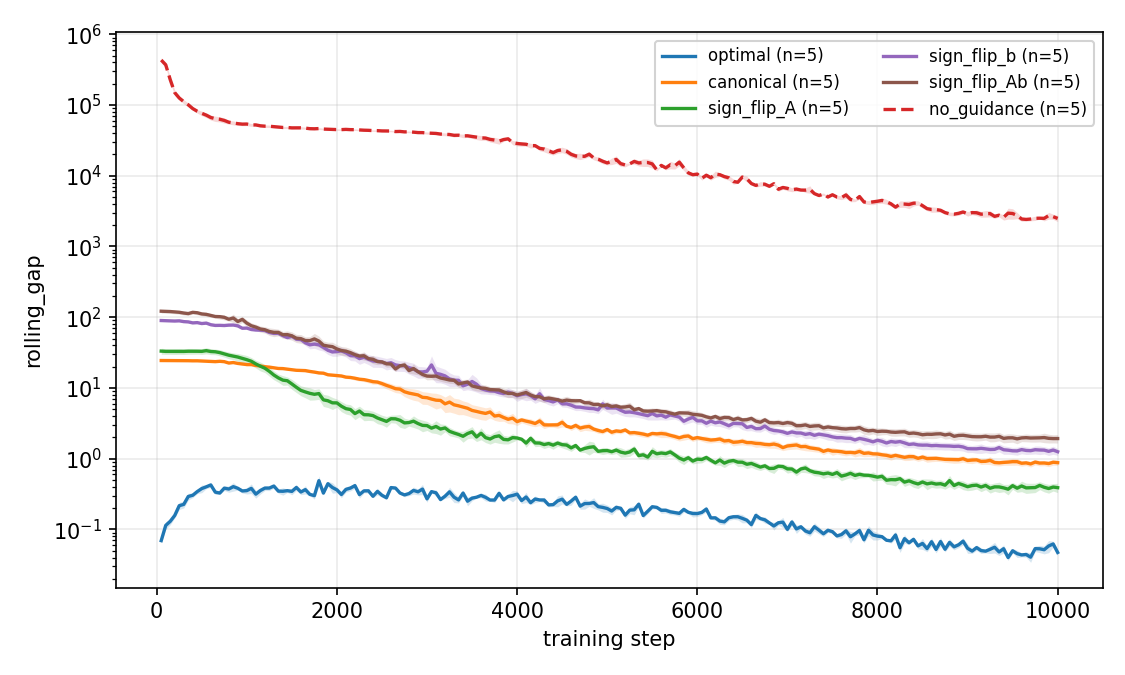}
        \textbf{(a) Benchmark 1A}
    \end{minipage}
    \hfill
    \begin{minipage}{0.48\textwidth}
        \centering
        \includegraphics[width=\linewidth]{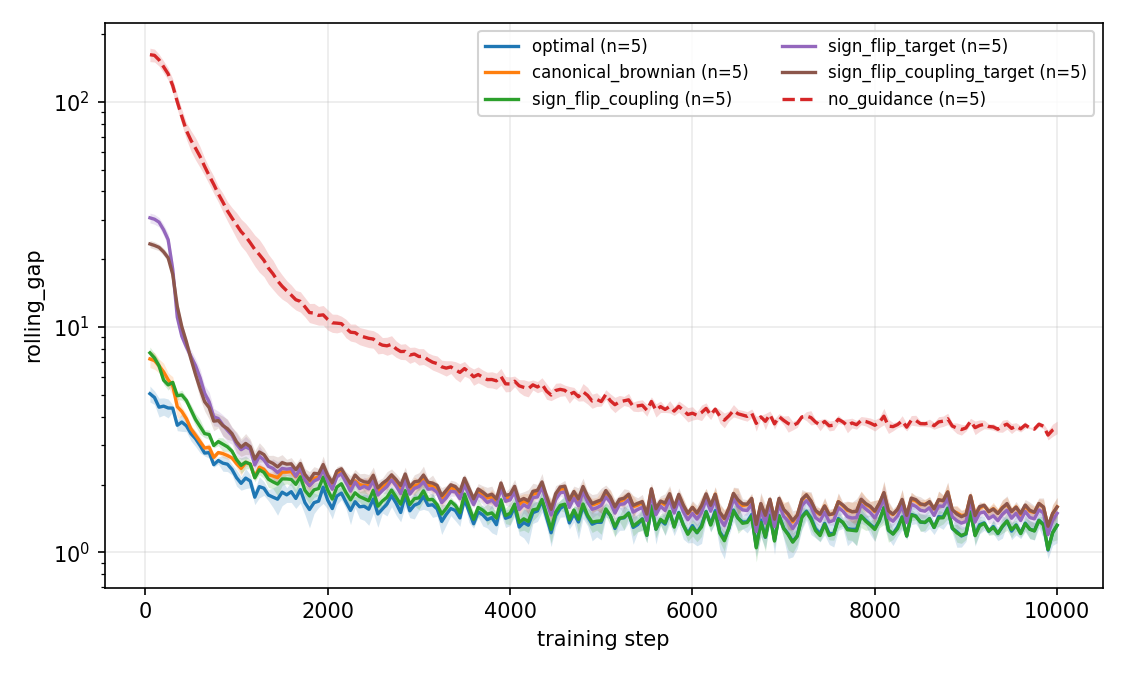}
        \textbf{(b) Benchmark 1B}
    \end{minipage}
    \caption{Training loss curves for the exact-posterior benchmarks. Guided proxies converge rapidly to low NELBO values, while the no-guidance ablations remain substantially worse, illustrating the role of BFFG guidance as a useful base distribution.}
    \label{fig:exp1_loss_curves}
\end{figure}

For the optimal proxy in the continuous benchmark, corrected results are reported for all metrics because the variational law is simulated with a fixed Euler--Maruyama discretization. Even when the proxy drift and diffusion match the true OU dynamics, the reported diagnostics include both Monte Carlo error and SDE time-discretization error. These entries should be interpreted as a reference for the best performance attainable under the current discretization and evaluation strategy, not as evidence of posterior misspecification by the method.

\begin{table}[t]
\centering
\caption{Benchmark 1B: continuous OU tree under proxy misspecification. Values are mean \(\pm\) std. over five runs. The proxy column gives choices of \((\tilde B,\tilde\theta)\), with ``No guidance'' denoting an unguided prior-based ablation. ``Uncorr.'' denotes the proxy guide before neural correction, ``Corr.'' denotes NBFFG after neural correction, and gain is \(1-\mathrm{Corr.}/\mathrm{Uncorr.}\). For the optimal proxy, corrected results are also meaningful because the controlled SDE objective is evaluated under a fixed time discretization.}
\label{tab:exp1b_proxy_correction}
\setlength{\tabcolsep}{2.4pt}
\scriptsize
\resizebox{\textwidth}{!}{%
\begin{tabular}{@{}lrrrrrrrrrrrr@{}}
\toprule
& \multicolumn{3}{c}{Relative NELBO gap \(\downarrow\)}
& \multicolumn{3}{c}{Average marginal KL \(\downarrow\)}
& \multicolumn{3}{c}{Marginal mean error \(\downarrow\)}
& \multicolumn{3}{c}{Marginal covariance error \(\downarrow\)} \\
\cmidrule(lr){2-4}\cmidrule(lr){5-7}\cmidrule(lr){8-10}\cmidrule(l){11-13}
Proxy
& Uncorr. & Corr. & Gain \(\uparrow\)
& Uncorr. & Corr. & Gain \(\uparrow\)
& Uncorr. & Corr. & Gain \(\uparrow\)
& Uncorr. & Corr. & Gain \(\uparrow\) \\
\midrule
No guidance
& \(29.245\pm17.626\) & \(0.605\pm0.226\) & \(97.9\%\)
& \(11.904\pm1.922\) & \(0.144\pm0.022\) & \(98.8\%\)
& \(0.173\pm0.034\) & \(0.011\pm0.001\) & \(93.8\%\)
& \(7.159\pm0.330\) & \(0.514\pm0.058\) & \(92.8\%\) \\
\((B,\theta)\)
& \(0.892\pm0.583\) & \(0.213\pm0.075\) & \(76.2\%\)
& \(0.261\pm0.051\) & \(0.048\pm0.011\) & \(81.5\%\)
& \(0.032\pm0.004\) & \(0.010\pm0.001\) & \(69.6\%\)
& \(0.587\pm0.040\) & \(0.264\pm0.023\) & \(54.9\%\) \\
\((0, 0)\)
& \(1.273\pm0.724\) & \(0.262\pm0.104\) & \(79.4\%\)
& \(0.387\pm0.121\) & \(0.062\pm0.008\) & \(84.0\%\)
& \(0.054\pm0.012\) & \(0.009\pm0.001\) & \(82.4\%\)
& \(0.404\pm0.021\) & \(0.313\pm0.033\) & \(22.7\%\) \\
\((DBD,\theta)\)
& \(1.294\pm0.648\) & \(0.214\pm0.073\) & \(83.5\%\)
& \(0.431\pm0.051\) & \(0.045\pm0.004\) & \(89.5\%\)
& \(0.041\pm0.008\) & \(0.009\pm0.001\) & \(77.0\%\)
& \(0.603\pm0.041\) & \(0.257\pm0.017\) & \(57.4\%\) \\
\((B,-\theta)\)
& \(5.122\pm1.722\) & \(0.244\pm0.092\) & \(95.2\%\)
& \(1.381\pm0.208\) & \(0.056\pm0.008\) & \(96.0\%\)
& \(0.093\pm0.005\) & \(0.010\pm0.002\) & \(89.7\%\)
& \(0.587\pm0.040\) & \(0.291\pm0.028\) & \(50.4\%\) \\
\((DBD,-\theta)\)
& \(3.955\pm1.470\) & \(0.264\pm0.111\) & \(93.3\%\)
& \(1.105\pm0.105\) & \(0.062\pm0.009\) & \(94.4\%\)
& \(0.083\pm0.005\) & \(0.010\pm0.002\) & \(88.1\%\)
& \(0.603\pm0.041\) & \(0.317\pm0.026\) & \(47.3\%\) \\
\bottomrule
\end{tabular}%
}
\end{table}

\subsection{Benchmark 1C: Linear Gaussian Path Subsampling}
\label{app:exp1c_details}

Benchmark~1C uses a wider discrete linear-Gaussian tree to test the accuracy--speed trade-off of path-subsampled training. The hidden transition is the same family as Benchmark~1A,
\begin{equation}
    X_v=A_vX_{\pa{v}}+b_v+\epsilon_v,\qquad \epsilon_v\sim\mathcal{N}(0,Q),
    \label{eq:exp1c_hidden_transition}
\end{equation}
with \(d=4\) and a fixed root \(x_0=0\). The canonical random-walk proxy \((\tilde{A}_v,\tilde{b}_v)=(I,0)\) is used in all runs, with the true covariance \(Q\). Unlike Benchmark~1A, each observed leaf uses an edge-specific orthogonal observation operator,
\begin{equation}
    X_l\mid X_{\pa{l}}\sim\mathcal{N}(G_lX_{\pa{l}},r_{\mathrm{obs}}^2I),\qquad G_l\in O(d),\qquad r_{\mathrm{obs}}=0.5.
    \label{eq:exp1c_observation_model}
\end{equation}
The matrices \(G_l\) are fixed across all runs and generated once from the model seed. Specifically, for each observed leaf \(l\), we draw \(Z_l\in\mathbb{R}^{d\times d}\) with i.i.d. standard normal entries, compute a QR decomposition \(Z_l=Q_lR_l\), draw an independent diagonal sign matrix \(D_l=\operatorname{diag}(\xi_{l,1},\ldots,\xi_{l,d})\) with \(\xi_{l,i}\in\{-1,1\}\), and set \(G_l=Q_lD_l\). This gives each observed sibling a distinct rotated view of the same terminal latent state while preserving signal scale and numerical conditioning. The exact reference is again obtained by Gaussian tree smoothing.

The latent topology is a balanced \(4\)-ary tree with three latent layers: one root, four intermediate latent vertices, and sixteen terminal latent vertices. Four observed leaves are attached to each terminal latent vertex, giving \(21\) latent vertices, \(64\) observed leaves, and \(85\) total vertices in the augmented tree. Figure~\ref{fig:exp1_subsampling_trees} (a) shows this topology and highlights the multiple observation leaves attached to each terminal latent state.

We compare three training strategies. Full-tree training uses all \(64\) observations per iteration and serves as the time and accuracy reference. Direct leaf-path subsampling samples \(m\in\{1,2,4,8,16,32\}\) observed leaves per iteration and reweights their contributions. Rao--Blackwellized terminal-path subsampling samples \(m\in\{1,2,4,8\}\) terminal latent vertices per iteration and includes all four observed leaves attached to each sampled terminal, corresponding to \(4m\) effective observations per step. Evaluation is always performed on the full tree against the exact smoothing posterior.

All models use the same 5-layer correction network with hidden dimension \(32\), embedding dimension \(16\), \(32\) training particles per iteration, \(30{,}000\) iterations, peak learning rate \(10^{-3}\), \(5000\) warmup steps, cosine decay to \(10\%\) of the peak learning rate, and gradient clipping at \(5.0\). Evaluation uses \(16\) batches of \(128\) particles for NELBO estimation and \(128\) posterior samples for marginal diagnostics.

Figure~\ref{fig:exp1_subsampling_scaling} (a) plots the resulting accuracy--speed trade-off. Full-tree training achieves \(\mathrm{KL}_{\mathrm{avg}}=0.066\pm0.008\). The most aggressive settings with one or two effective observations per step are substantially faster but too noisy to be reliable. Around eight effective observations per step, the methods retain a useful speed advantage, about \(1.7\times\) relative to full-tree training, while reaching KL error around \(0.09\). Increasing the effective observations further reduces KL but also erodes the speedup. In this benchmark, Rao--Blackwellized terminal sampling closely tracks direct leaf-path sampling at matched effective observations per step. This is consistent with the conditional independence of the attached observations; stronger gains from Rao--Blackwellization would likely require correlated or otherwise structured observation groups.

\subsection{Benchmark 1D: OU Path Subsampling}
\label{app:exp1d_details}

Benchmark~1D repeats the subsampling study for continuous OU edges. The hidden dynamics are the same family as Benchmark~1B,
\begin{equation}
    dZ_v(t)=B_v(\theta_v-Z_v(t))dt+\sigma dW_t,\qquad Z_v(0)=X_{\pa{v}},\qquad X_v=Z_v(T_v),
    \label{eq:exp1d_ou_transition}
\end{equation}
with \(d=2\), fixed root \(x_0=0\), and the canonical Brownian proxy \(d\tilde{Z}_v(t)=\sigma d\tilde{W}_t\). Each observed leaf uses \(X_l\mid X_{\pa{l}}\sim\mathcal{N}(X_{\pa{l}},r_{\mathrm{obs}}^2I)\) with \(r_{\mathrm{obs}}=0.1\). Exact references are computed from the closed-form OU endpoint transitions as in Benchmark~1B.

The latent tree is intentionally small and shallow due to the growth of continuous integration steps: the root has four terminal latent children, and each terminal latent vertex has four observed leaves. Thus the augmented tree contains \(5\) latent vertices, \(16\) observed leaves, and \(21\) total vertices. Figure~\ref{fig:exp1_subsampling_trees} (b) shows this topology.

\begin{figure}[ht]
    \centering
    \begin{minipage}{1.0\textwidth}
        \centering
        \includegraphics[width=\linewidth]{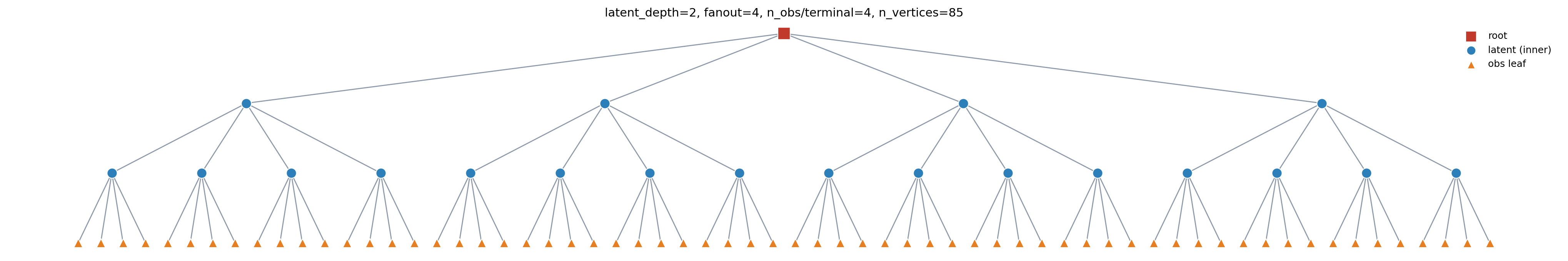}
        \textbf{(a) Benchmark 1C}
    \end{minipage}
    \vspace{6pt}
    \begin{minipage}{0.8\textwidth}
        \centering
        \includegraphics[width=\linewidth]{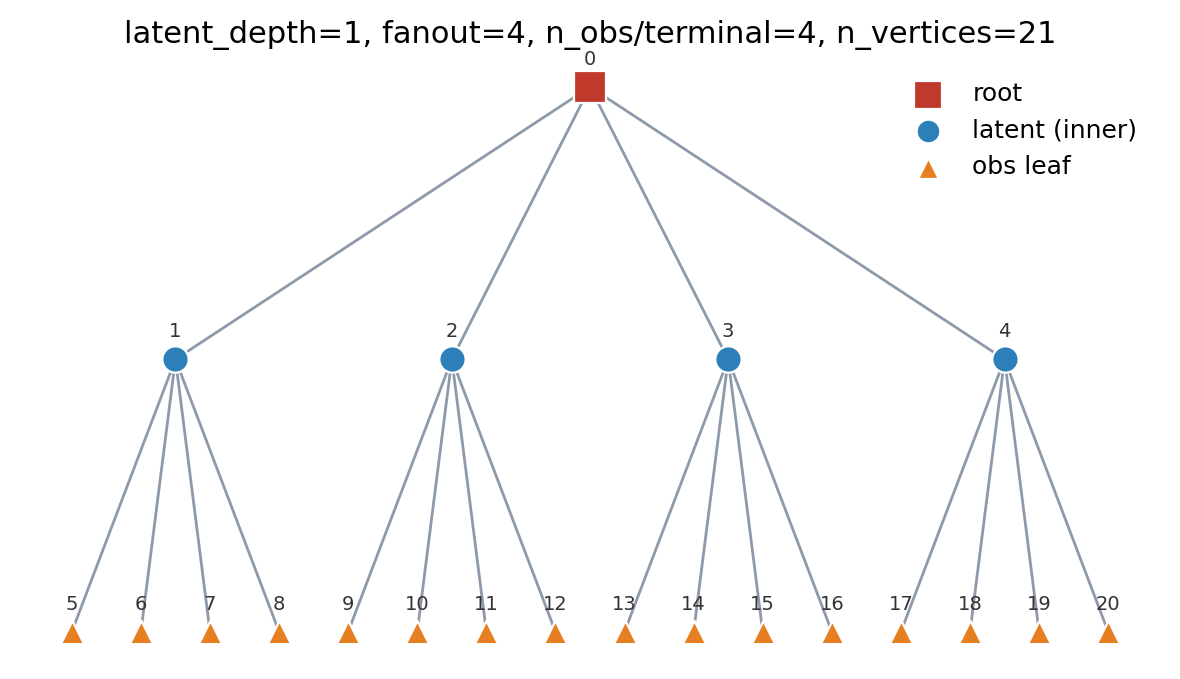}
        \textbf{(b) Benchmark 1D}
    \end{minipage}
    \caption{Tree topologies for the path-subsampling benchmarks. Benchmark~1C uses a wider discrete linear-Gaussian tree, while Benchmark~1D uses a smaller shallow OU tree. In both cases, multiple observed leaves are attached to each terminal latent vertex.}
    \label{fig:exp1_subsampling_trees}
\end{figure}

The training strategies mirror Benchmark~1C. Full-tree training uses all \(16\) observations per iteration. Direct leaf-path subsampling uses \(m\in\{1,2,4,8,16\}\) observed leaves per iteration. Rao--Blackwellized terminal-path subsampling uses \(m\in\{1,2,4\}\) terminal latent vertices per iteration, corresponding to \(4m\) effective observations. The case \(m=4\) covers all terminal groups and is therefore a sanity check against full-tree training, up to subsampler overhead.

All variational SDEs use \(50\) Euler--Maruyama steps per hidden edge. The correction network has five layers, hidden dimension \(32\), node embedding dimension \(16\), time embedding dimension \(16\), \(16\) training particles per iteration, \(10{,}000\) iterations, peak learning rate \(10^{-3}\), \(1000\) warmup steps, cosine decay to \(10\%\) of the peak learning rate, and no gradient clipping. Evaluation uses \(16\) batches of \(128\) particles for NELBO estimation and \(128\) endpoint samples for marginal diagnostics.

Figure~\ref{fig:exp1_subsampling_scaling} (b) shows a qualitatively different regime from Benchmark~1C. Full-tree training gives \(\mathrm{KL}_{\mathrm{avg}}=0.061\pm0.016\). Even the most aggressive direct leaf-path sampler, using one observation per step, remains stable with \(\mathrm{KL}_{\mathrm{avg}}=0.110\pm0.025\) and \(2.57\times\) speedup. This is likely because the tree is much smaller and shallower than in Benchmark~1C, and because the uncorrected Brownian guide is already close to the exact posterior on this short OU tree, with uncorrected \(\mathrm{KL}_{\mathrm{avg}}=0.255\pm0.078\). The continuous setting also yields larger speedups because skipping an edge avoids simulating many Euler--Maruyama steps.

Rao--Blackwellized terminal sampling improves over direct leaf-path sampling in this benchmark. At matched effective observations, the differences are small but consistent; for example, at eight effective observations per step, direct leaf sampling gives \(\mathrm{KL}_{\mathrm{avg}}=0.073\pm0.017\) and \(1.39\times\) speedup, while Rao--Blackwellized sampling gives \(\mathrm{KL}_{\mathrm{avg}}=0.069\pm0.016\) and \(1.38\times\) speedup. The full-coverage Rao--Blackwellized case matches full-tree training, with \(\mathrm{KL}_{\mathrm{avg}}=0.061\pm0.016\) and \(0.99\times\) speedup, confirming that the grouped estimator reduces to the full objective up to path-by-path overhead. Part of the observed advantage over direct leaf-path sampling is implementation-specific: direct leaf paths are sampled with replacement, while terminal groups are sampled without replacement. On a \(16\)-leaf tree, replacement creates duplicate sampled leaves more often, whereas terminal sampling avoids this redundancy.

\begin{figure}[ht]
    \centering
    \begin{minipage}{0.49\textwidth}
        \centering
        \includegraphics[width=\linewidth]{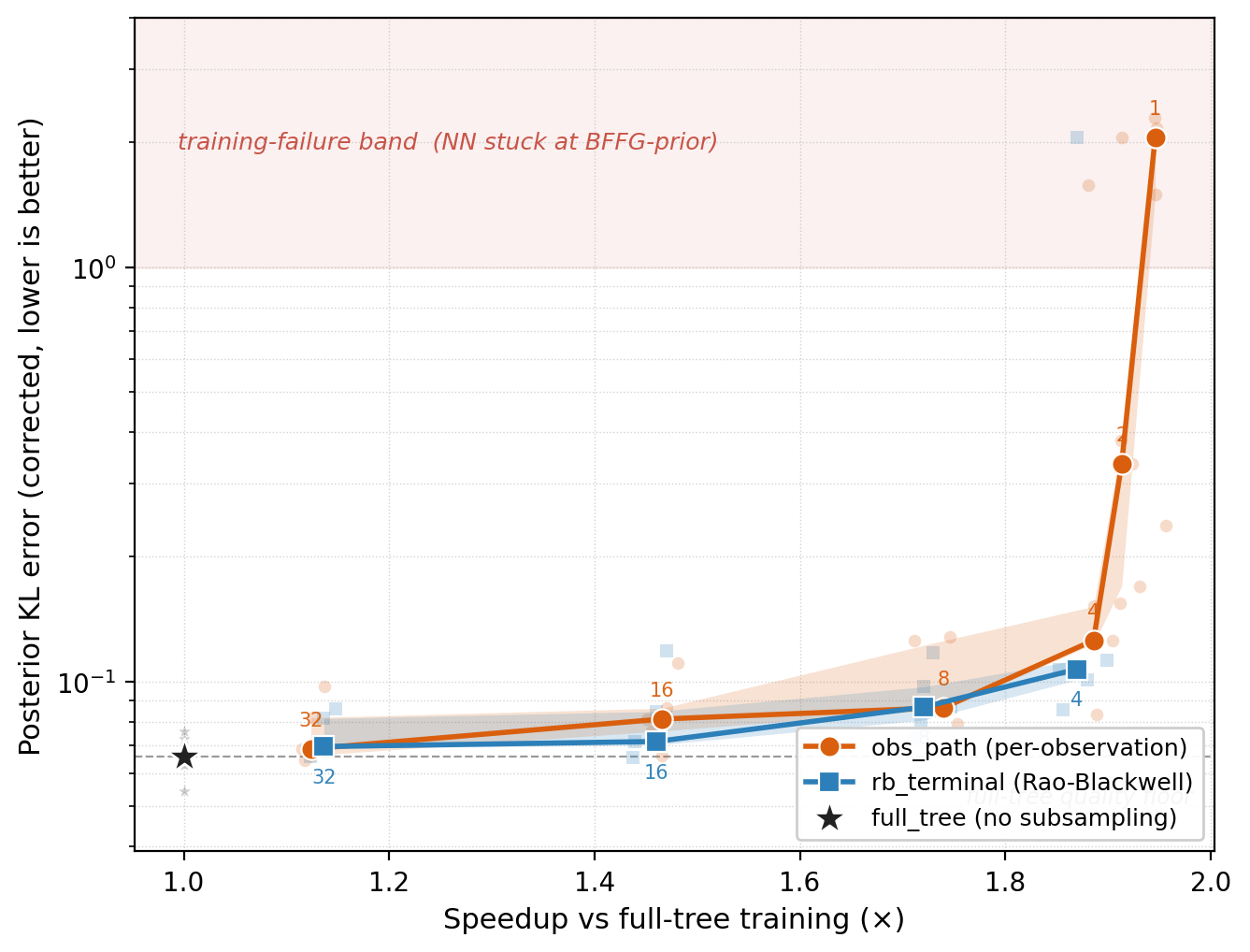}
        \textbf{(a) Benchmark 1C}
    \end{minipage}
    \hfill
    \begin{minipage}{0.49\textwidth}
        \centering
        \includegraphics[width=\linewidth]{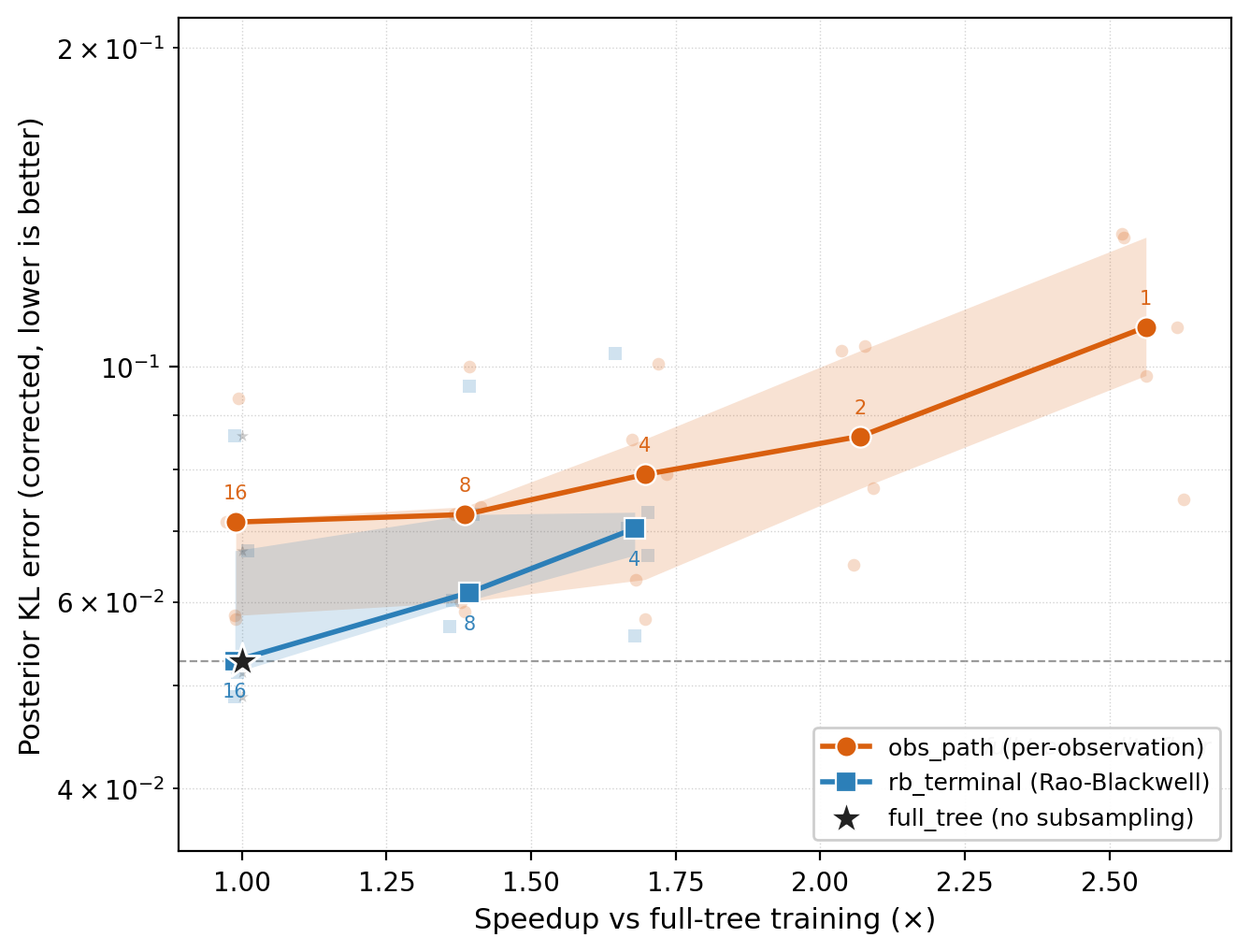}
        \textbf{(b) Benchmark 1D}
    \end{minipage}
    \caption{Accuracy--speed trade-offs for the path-subsampling benchmarks. The horizontal axis is speedup relative to full-tree training and the vertical axis is average marginal KL to the exact smoothing posterior. Benchmark~1C shows a sharper accuracy--speed trade-off, while Benchmark~1D remains stable under aggressive subsampling and gives larger speedups because each skipped OU path avoids SDE simulation.}
    \label{fig:exp1_subsampling_scaling}
\end{figure}

\subsection{Benchmark 2A: Folded Nonlinear Gaussian Tree}
\label{app:exp2a_details}

Benchmark~2A uses a small nonlinear discrete tree designed to create a four-mode root posterior while keeping the reference calculation lightweight. The inferred stochastic root $\rho$ is attached to the fixed super-root $\0$ and has prior \(X_\rho\sim\mathcal{N}(0,\tau_0^2I_2)\) with \(\tau_0=1.5\). In the implementation, this prior is represented by a Gaussian edge from the fixed dummy parent into $\rho$. The inferred root has four terminal latent children, each with transition
\begin{equation}
    X_v=f(X_\rho)+\epsilon_v,\qquad f(x)=(x_1^2,x_2^2),\qquad \epsilon_v\sim\mathcal{N}(0,q^2I_2),
    \label{eq:exp2a_folded_transition}
\end{equation}
where \(q=0.08\). Each terminal latent vertex has one observed child with \(X_l\mid X_{\pa{l}}\sim\mathcal{N}(X_{\pa{l}},r_{\mathrm{obs}}^2I_2)\), \(r_{\mathrm{obs}}=0.05\), and all observed leaves are fixed at \(x_l=(1,1)\). The augmented implementation tree has \(10\) vertices: one fixed dummy parent $\0$, one inferred stochastic root $\rho$, four terminal latent vertices, and four observed leaves. Figure~\ref{fig:exp2a_tree} shows the topology.

\begin{figure}[ht]
    \centering
    \includegraphics[width=0.55\textwidth]{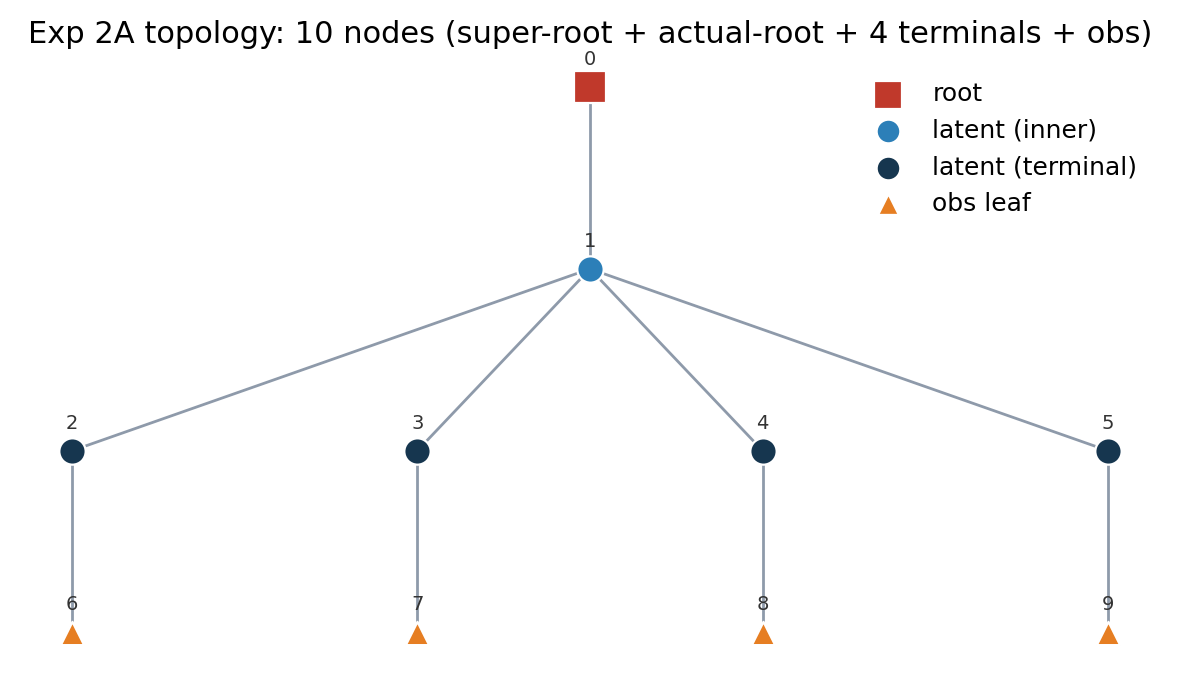}
    \caption{Tree topology for Benchmark~2A. The fixed dummy parent $\0$ implements the Gaussian prior on the inferred stochastic root $\rho$.}
    \label{fig:exp2a_tree}
\end{figure}

The reference posterior for $X_\rho$ is computed by analytically integrating out the terminal latent vertices. Since
\begin{equation}
    x_l\mid X_\rho=x\sim\mathcal{N}(f(x),(q^2+r_{\mathrm{obs}}^2)I_2),
    \label{eq:exp2a_marginal_likelihood}
\end{equation}
we evaluate
\begin{equation}
    p(x\mid x_\L)\propto \mathcal{N}(x;0,\tau_0^2I_2)\prod_{l\in\L}\mathcal{N}(x_l;f(x),(q^2+r_{\mathrm{obs}}^2)I_2)
    \label{eq:exp2a_grid_reference}
\end{equation}
on a \(201\times201\) grid over \([-3,3]^2\), normalize numerically, and use the resulting grid distribution to estimate root-quadrant probabilities and draw reference samples. The four reference quadrant probabilities are \(0.25\) up to numerical precision.

Let \(\mathcal{Q}=\{Q_{++},Q_{+-},Q_{-+},Q_{--}\}\), where each quadrant is defined by the signs of \((x_1,x_2)\). Given root samples \(\{x_\rho^{(i)}\}_{i=1}^n\) from a method, define the empirical quadrant probability
\begin{equation}
    \hat{q}(Q)=\frac{1}{n}\sum_{i=1}^n\mathds{1}\{x_\rho^{(i)}\in Q\},\qquad Q\in\mathcal{Q}.
    \label{eq:exp2a_empirical_quadrant}
\end{equation}
The quadrant-probability error is
\begin{equation}
    E_{\mathrm{quad}}=\sum_{Q\in\mathcal{Q}}|\hat{q}(Q)-p_{\mathrm{grid}}(Q)|,
    \label{eq:exp2a_quad_error}
\end{equation}
which measures whether the method assigns the correct probability mass to the four sign modes. Because the grid reference is symmetric in this benchmark, \(p_{\mathrm{grid}}(Q)=0.25\) for all four quadrants up to numerical precision. The maximum possible value is \(1.5\), attained by a method that puts all mass in one quadrant.

The recovered-mode count is
\begin{equation}
    M_{\mathrm{rec}}=\sum_{Q\in\mathcal{Q}}\mathds{1}\{\hat{q}(Q)\ge 0.01\}.
    \label{eq:exp2a_recovered_modes}
\end{equation}
It is a coarse mode-coverage diagnostic: \(M_{\mathrm{rec}}=4\) means that all four sign modes are represented by the sample set, but it does not by itself imply the correct mode weights. This is why Table~\ref{tab:exp2a_multimodal} reports both recovered modes and \(E_{\mathrm{quad}}\).

For density-level comparison, let \(\hat{p}^{\mathrm{hist}}\) and \(p_{\mathrm{grid}}^{\mathrm{hist}}\) be the method and grid-reference root distributions binned on the same two-dimensional histogram. The reported JS divergence is
\begin{equation}
    \mathrm{JS}(\hat{p}^{\mathrm{hist}},p_{\mathrm{grid}}^{\mathrm{hist}})
    =
    \frac{1}{2}\mathrm{KL}(\hat{p}^{\mathrm{hist}}\Vert m)
    +
    \frac{1}{2}\mathrm{KL}(p_{\mathrm{grid}}^{\mathrm{hist}}\Vert m),
    \qquad
    m=\frac{1}{2}(\hat{p}^{\mathrm{hist}}+p_{\mathrm{grid}}^{\mathrm{hist}}).
    \label{eq:exp2a_js}
\end{equation}
It measures global discrepancy of the root marginal density, not only quadrant mass. Finally, given method samples \(x_i\), grid-reference samples \(y_i\), and random unit directions \(u_r\in\mathbb{S}^1\), the reported sliced Wasserstein distance is
\begin{equation}
    \mathrm{SW}_2
    =
    \left\{
    \frac{1}{R}\sum_{r=1}^{R}
    W_2^2\!\left(
        \{\langle u_r,x_i\rangle\}_{i=1}^n,
        \{\langle u_r,y_i\rangle\}_{i=1}^n
    \right)
    \right\}^{1/2}.
    \label{eq:exp2a_sw}
\end{equation}
It is sensitive to displacement of mass within and across quadrants. Lower values of JS and SW\(_2\) indicate closer agreement with the grid-reference root posterior.

For NBFFG GMM, four of the five independent training seeds recover all four modes. One run recovers three modes, producing the nonzero standard deviation in the recovered-mode count and the larger standard deviation in \(E_{\mathrm{quad}}\), JS, and SW\(_2\). We report this directly because it reflects residual optimization sensitivity of the mixture correction to network initialization.

\begin{table}[t]
\centering
\caption{Benchmark 2A: folded nonlinear Gaussian tree. Values are mean \(\pm\) std. over five runs. The grid reference has four equiprobable root-quadrant modes. Lower is better for quadrant error, JS, and sliced \(W_2\); higher is better for recovered modes.}
\label{tab:exp2a_multimodal}
\scriptsize
\setlength{\tabcolsep}{4pt}
\begin{tabular}{@{}lrrrrr@{}}
\toprule
Method & \(K_{\mathrm{mix}}\) & Modes \(\uparrow\) & Quad. err. \(\downarrow\) & JS \(\downarrow\) & SW\(_2\) \(\downarrow\) \\
\midrule
Prior & -- & \(4.0\pm0.0\) & \(0.013\pm0.004\) & \(0.669\pm0.002\) & \(0.493\pm0.005\) \\
Guide only & -- & \(1.0\pm0.0\) & \(1.500\pm0.000\) & \(0.406\pm0.001\) & \(1.126\pm0.000\) \\
Unguided GMM & \(4\) & \(2.2\pm0.4\) & \(0.903\pm0.209\) & \(0.626\pm0.012\) & \(0.560\pm0.077\) \\
NBFFG Gaussian & \(1\) & \(1.0\pm0.0\) & \(1.500\pm0.000\) & \(0.406\pm0.001\) & \(1.124\pm0.001\) \\
NBFFG GMM & \(4\) & \(\mathbf{3.8\pm0.4}\) & \(\mathbf{0.118\pm0.219}\) & \(\mathbf{0.075\pm0.043}\) & \(\mathbf{0.094\pm0.143}\) \\
\bottomrule
\end{tabular}
\end{table}

The canonical guide uses the random-walk proxy \(\tilde{X}_v=X_\rho+\tilde{\epsilon}_v\), \(\tilde{\epsilon}_v\sim\mathcal{N}(0,q^2I_2)\), while keeping the true observation model. We compare the prior, the uncorrected guide, an unguided Gaussian mixture, NBFFG with one Gaussian component, and NBFFG with four mixture components. All trained models use a three-layer correction network with embedding dimension \(8\), \(32\) particles per iteration, \(5{,}000\) training iterations, peak learning rate \(10^{-3}\), \(200\) warmup steps, cosine decay to \(30\%\) of the peak learning rate, and gradient clipping at \(5.0\). Evaluation uses \(16\) batches of \(128\) particles for NELBO estimation and \(8{,}192\) root samples for posterior diagnostics.

\subsection{Benchmark 2B: Double-Well Diffusion Tree}
\label{app:exp2b_details}

Benchmark~2B uses a one-dimensional double-well diffusion on each hidden edge,
\begin{equation}
    dZ_v(t)=-4\alpha Z_v(t)(Z_v(t)^2-1)dt+\sigma dW_t,\qquad Z_v(0)=X_{\pa{v}},\qquad X_v=Z_v(T_v).
    \label{eq:exp2b_double_well_sde}
\end{equation}
The potential wells are centered at \(-1\) and \(1\). Each observed leaf is attached to a terminal latent vertex with \(X_l\mid X_{\pa{l}}\sim\mathcal{N}(X_{\pa{l}},r_{\mathrm{obs}}^2)\). The root is fixed at \(X_\0=0\). The latent tree has two root children and four terminal latent vertices; after attaching one observed leaf to each terminal latent vertex, the augmented tree has \(11\) vertices. The two root-child edges have length \(T_{\mathrm{root}}=4.0\), and the terminal latent edges have length \(T_{\mathrm{leaf}}=1.0\).

The numerical parameters are
\begin{equation}
    \alpha=3.0,\qquad \sigma=0.5,\qquad r_{\mathrm{obs}}=0.1.
    \label{eq:exp2b_parameters}
\end{equation}
All continuous methods use the same Euler--Maruyama discretization with \(100\) solver steps per edge. The canonical proxy is Brownian motion with the true diffusion coefficient, \(d\tilde{Z}_v(t)=\sigma d\tilde{W}_t\), together with the true Gaussian observation model. This proxy is intentionally misspecified in drift while remaining path-measure compatible with the true diffusion.

We use two deterministic observation regimes. In the early-commitment regime, the four observed leaves are fixed at
\begin{equation}
    x_\L=(-1,-1,1,1),
    \label{eq:exp2b_early_commit_obs}
\end{equation}
so the two main branches should commit early to opposite wells. In the bimodal regime,
\begin{equation}
    x_\L=(-1,-1,-1,1),
    \label{eq:exp2b_bimodal_obs}
\end{equation}
which leaves competing explanations for the major branch assignments and produces a substantially harder path-space inference problem.

NBFFG uses a five-layer residual control network with hidden dimension \(32\), structural embedding dimension \(8\), \(32\) training particles per iteration, \(10{,}000\) iterations, peak learning rate \(10^{-3}\), \(500\) warmup steps, cosine decay to \(50\%\) of the peak learning rate, and gradient clipping at \(5.0\). Guided SMC uses \(1024\) particles and \(16\) linear temperature steps. Guided pCN uses one chain with \(5000\) steps, \(1000\) burn-in steps, thinning by \(100\), and proposal scale \(\beta=0.96\). All reported values are computed over five independent training seeds.

For an importance-weighted method with normalized weights \(\bar{w}_1,\ldots,\bar{w}_N\), we report
\begin{equation}
    \mathrm{ESS}=\frac{1}{\sum_{i=1}^{N}\bar{w}_i^2},\qquad \mathrm{ESS/N}=\frac{1}{N\sum_{i=1}^{N}\bar{w}_i^2}.
    \label{eq:exp2b_ess}
\end{equation}
ESS applies to guide only, NBFFG, and guided SMC, but not to guided pCN, for which we instead report the Metropolis acceptance rate (acc.). Since the sample budgets differ across methods, ESS/N is the primary proposal-efficiency diagnostic. Table~\ref{tab:exp2b_double_well} shows that NBFFG improves ESS/N in both regimes, while the bimodal case remains severely degenerate for all importance-based methods and nearly non-mixing for guided pCN.

\begin{table}[htbp]
\centering
\caption{Benchmark 2B: double-well diffusion tree. Values are mean \(\pm\) std. over five training seeds. ESS is reported for importance-weighted proposals and guided SMC; guided pCN is reported by acceptance rate instead.}
\label{tab:exp2b_double_well}
\scriptsize
\setlength{\tabcolsep}{5pt}
\begin{tabular}{@{}llrrrr@{}}
\toprule
Regime & Method & ESS \(\uparrow\) & ESS / particle \(\uparrow\) & pCN acc. \(\uparrow\) & Inference time (s) \\
\midrule
Early commit & Guide only & \(279.6\pm8.9\) & \(0.273\pm0.009\) & -- & \(8.50\pm0.09\) \\
Early commit & Guided SMC & \(277.9\pm16.4\) & \(0.271\pm0.016\) & -- & \(1.39\pm0.01\) \\
Early commit & Guided pCN & -- & -- & \(0.386\pm0.006\) & \(3.60\pm0.03\) \\
Early commit & NBFFG & \(\mathbf{541.6\pm105.5}\) & \(\mathbf{0.529\pm0.103}\) & -- & \(14.72\pm0.01\) \\
\midrule
Bimodal & Guide only & \(3.0\pm3.3\) & \(0.0029\pm0.0032\) & -- & \(4.73\pm0.02\) \\
Bimodal & Guided SMC & \(2.0\pm1.1\) & \(0.0020\pm0.0011\) & -- & \(0.94\pm0.01\) \\
Bimodal & Guided pCN & -- & -- & \(0.0015\pm0.0015\) & \(2.89\pm0.03\) \\
Bimodal & NBFFG & \(\mathbf{7.8\pm5.9}\) & \(\mathbf{0.0076\pm0.0058}\) & -- & \(14.07\pm0.06\) \\
\bottomrule
\end{tabular}
\end{table}

\subsection{Experiment 3: Phylogenetic Butterfly Shape Inference}
\label{app:exp3_butterfly_details}

Experiment~3 uses NBFFG as a qualitative real-world application for stochastic phylogenetic shape inference. Classical phylogenetic comparative models often use Brownian or Ornstein--Uhlenbeck dynamics for low-dimensional continuous traits \citep{felsenstein_evolutionarytreesdna_1981,butler_phylogenetic_2004}. Here the state is instead a two-dimensional wing outline, represented by ordered landmarks. We use a Kunita-flow shape diffusion, following recent stochastic shape-process models \citep{sommer_stochasticsshapeskunitaflows_2025,stroustrup_stochastic_2025}, because the induced noise is spatially correlated across nearby landmarks and therefore preserves coherent deformations better than independent Euclidean noise on each coordinate.

Let \(z=(z^1,\ldots,z^n)\), with \(z^i\in\mathbb{R}^2\), denote the landmark representation of a wing outline. Along each hidden phylogenetic edge we use the drift-free shape SDE
\begin{equation}
    dZ_v(t)=K(Z_v(t))dW_v(t),\qquad Z_v(0)=X_{\pa{v}},\qquad X_v=Z_v(T_v),
    \label{eq:exp3_shape_sde}
\end{equation}
where \(K(z)\) is a $2n\times 2n$ block matrix, whose \(2\times2\) landmark block is \([K(z)]_{ij}=\kappa(z^i,z^j)I_2\), with \(\kappa(z^i,z^j)=4\alpha(3+3r+r^2)\exp(-r/\sigma)\) and \(r=\|z^i-z^j\|_2\). We use \(\alpha=0.05\) and \(\sigma=0.2\). Observed extant species are modeled as Gaussian leaf-edge observations, \(X_l\mid X_{\pa{l}}\sim\mathcal{N}(X_{\pa{l}},0.05^2I)\).

The data consist of a Papilio phylogeny with \(17\) observed extant species. After adding a fixed super-root $\0$, the augmented tree has \(34\) vertices and depth \(8\). The fixed super-root state is the Euclidean mean of the observed landmark configurations, and the original phylogenetic root $\rho\in\S$ is treated as the inferred most recent common ancestor. The raw outlines have \(118\) landmarks; we remove one overlapping landmark, keep every second landmark, rescale coordinates by \(0.002\), and obtain \(n=59\) landmarks, hence state dimension \(d=118\). Branch lengths are normalized by the maximum original edge length, and the synthetic super-root edge is assigned the mean normalized edge length.

The proxy has zero drift and uses the same edge lengths as the phylogeny. In the reported run, we use a per-edge proxy diffusion: after a backward proxy pass, the diffusion matrix is evaluated at the Gaussian message mode associated with the child endpoint. This keeps the guide analytically Gaussian while adapting the local diffusion scale to the shape expected at each edge. The neural residual control is then trained on top of this proxy guide.

The variational SDE uses \(50\) Euler--Maruyama steps per edge. The residual control network has \(4\) layers, hidden dimension \(512\), node embedding dimension \(16\), time embedding dimension \(16\), and SiLU activations. We train for \(5000\) iterations using Adam with learning rate \(10^{-3}\), cosine decay to \(10\%\) of the peak learning rate, \(8\) particles per iteration, and gradient clipping at \(5.0\). For visualization, we draw \(200\) posterior endpoint samples and plot posterior means for the internal vertices. This experiment is not used as a quantitative benchmark because no exact posterior is available; it is intended to show that the same NBFFG construction can be applied to high-dimensional nonlinear tree smoothing with real observations.

\section{Proofs}
\label{app:proofs}

\paragraph{Full recursion for Proposition~\ref{prop:gaussian_guiding_messages}.}
The proxy messages in Proposition~\ref{prop:gaussian_guiding_messages} have information forms
\[
    \log\tilde{h}_u(x)
    =
    -\frac{1}{2}x^\top\tilde{H}_u x +\tilde{\eta}_u^\top x + c_u,
    \qquad
    \log\tilde{m}_v(x)
    =
    -\frac{1}{2}x^\top G_vx + \xi_v^\top x + c'_v,
\]
with $\tilde{H}_u=\sum_{v\in\ch{u}}G_v$ and $\tilde{\eta}_u=\sum_{v\in\ch{u}}\xi_v$. The child-message parameters satisfy
\begin{equation}
    (G_v,\xi_v)
    =
    \begin{cases}
        (L_v^\top R_v^{-1}L_v,\; L_v^\top R_v^{-1}(x_v-\beta_v)), & v\in\L,\\
        ((\tilde{H}_v^{-1}+\tilde{\Sigma}_v)^{-1},\;(\tilde{H}_v^{-1}+\tilde{\Sigma}_v)^{-1}\tilde{H}_v^{-1}\tilde{\eta}_v), & v\in\S.
    \end{cases}
    \label{eq:gaussian_message_recursion}
\end{equation}
For a continuous edge, with $\tilde{\Sigma}_{v,t}=\int_t^{T_v}\tilde{\sigma}_v(\tau)\tilde{\sigma}_v(\tau)^\top \di \tau$,
\[
    \log\tilde{h}_{v,t}(z) = -\frac{1}{2}z^\top\tilde{H}_{v,t}z + \tilde{\eta}_{v,t}^\top z + c_{v,t},
    \qquad
    \tilde{H}_{v,t} = (\tilde{H}_v^{-1}+\tilde{\Sigma}_{v,t})^{-1},
    \qquad
    \tilde{\eta}_{v,t} = \tilde{H}_{v,t}\tilde{H}_v^{-1}\tilde{\eta}_v.
\]

\begin{proof}[Proof of Proposition~\ref{prop:gaussian_guiding_messages}]
The endpoint law of the continuous proxy in Equation~\eqref{eq:continuous_proxy} can be written as $\mathcal{N}(\di y;x,\tilde{\Sigma}_{T_v})$, where
\[
    \tilde{\Sigma}_{T_v} := \int_{0}^{T_v} \tilde{\sigma}_v(t)\tilde{\sigma}_v(t)^{\top}\,\di t.
\]
Thus the discrete and continuous proxies have the same endpoint-kernel form. We use a unified notation $\tilde{\Sigma}_v$ to represent the covariance of both endpoint transitions unless specified. The vertex recursion follows by induction from the leaves. For a leaf $l\in\L$, define the message directly as the observation likelihood
\[
    \tilde{m}_l(x) = p_l(x_l\mid x) = \mathcal{N}(x_l;L_lx+\beta_l,R_l),
\]
which, as a function of $x$, has information parameters $(G_l, \xi_l) = (L_l^\top R_l^{-1}L_l,L_l^\top R_l^{-1}(x_l-\beta_l))$. If
\[
    \log \tilde{h}_v(y) = -\frac{1}{2}y^\top \tilde{H}_v y + \tilde{\eta}_v^\top y + c_v
\]
for an internal child $v\in\S$, then $\tilde{h}_v(y)\propto \mathcal{N}(y;\tilde{H}_v^{-1}\tilde{\eta}_v,\tilde{H}_v^{-1})$ and
\[
    \tilde{m}_v(x) = (\tilde{P}_v\tilde{h}_v)(x) \propto \int \mathcal{N}(y;\tilde{H}_v^{-1}\tilde{\eta}_v,\tilde{H}_v^{-1}) \mathcal{N}(\di y;x,\tilde{\Sigma}_v) = \mathcal{N}(\tilde{H}_v^{-1}\tilde{\eta}_v;x,\tilde{H}_v^{-1}+\tilde{\Sigma}_v),
\]
which has information parameters given by $(G_v,\xi_v)$ in Equation~\eqref{eq:gaussian_message_recursion}. Multiplying child messages adds information parameters, giving $\tilde{H}_u=\sum_{v\in\ch{u}}G_v$ and $\tilde{\eta}_u=\sum_{v\in\ch{u}}\xi_v$.

For the within-edge guide, the proxy transition from time $t$ to $T_v$ is $\mathcal{N}(\di y;z,\tilde{\Sigma}_{v,t})$. Therefore
\[
    \tilde{h}_{v,t}(z)
    =
    (\tilde{P}_{v,t\to T_v}\tilde{h}_v)(z)
    \propto
    \int
    \mathcal{N}(y;\tilde{H}_v^{-1}\tilde{\eta}_v,\tilde{H}_v^{-1})
    \mathcal{N}(\di y;z,\tilde{\Sigma}_{v,t})
    =
    \mathcal{N}(\tilde{H}_v^{-1}\tilde{\eta}_v;z,\tilde{H}_v^{-1}+\tilde{\Sigma}_{v,t}).
\]
Its information parameters are $(\tilde{H}_{v,t},\tilde{\eta}_{v,t})$.
\end{proof}

\begin{proof}[Proof of Proposition~\ref{prop:tree_elbo}]
Let
\[
    \P(\di x_\S\mid x_\0)
    =
    \prod_{s\in\S}P_s(x_{\pa{s}},\di x_s)
\]
be the prior law on the unobserved internal vertices. By Equation~\eqref{eq:smoothing_posterior},
\[
    \frac{\di \P^\star}{\di \P}(x_\S)
    =
    \frac{1}{Z(x_\L,x_\0)}
    \prod_{l\in\L}p_l(x_l\mid x_{\pa{l}}).
\]
By the factorization of $\Q^\theta$ and absolute continuity of each edge kernel,
\[
    \frac{\di \Q^\theta}{\di \P}(x_\S)
    =
    \prod_{s\in\S}
    \frac{\di Q_s^\theta(X_{\pa{s}},\cdot)}
         {\di P_s(X_{\pa{s}},\cdot)}(X_s).
\]
Therefore
\[  
    \begin{aligned}
        \kld(\Q^\theta\Vert\P^\star)
        &= \mathbb{E}_{\Q^\theta} \left[\log\left(\frac{\di \Q^{\theta}}{\di \P^{\star}}(X_\S)\right)\right] \\
        &= \mathbb{E}_{\Q^\theta} \left[\log \left(\frac{\di \Q^{\theta}}{\di \P}(X_\S)\cdot\left(\frac{\di \P^{\star}}{\di \P}\right)^{-1}(X_\S)\right)\right] \\
        &= \mathbb{E}_{\Q^\theta} \left[\sum_{s\in\S} \log \frac{\di Q_s^\theta(X_{\pa{s}},\cdot)}{\di P_s(X_{\pa{s}},\cdot)}(X_s) - \sum_{l\in\L} \log p_l(x_l\mid X_{\pa{l}}) \right] + \log Z(x_\L,x_\0).
    \end{aligned}
\]
The bracketed expectation is $\J(\theta)$, proving the claim.
\end{proof}

%%%%%%%%%%%%%%%%%%%%%%%%%%%%%%%%%%%%%%%%%%%%%%%%%%%%%%%%%%%%

% \newpage
% \input{checklist.tex}

\end{document}